\documentclass[sigconf,10pt,nonacm]{acmart}
\renewcommand\footnotetextcopyrightpermission[1]{}
\usepackage{booktabs}
\usepackage{enumitem}
\usepackage{amsmath,mathtools}
\usepackage{graphicx}
\usepackage{subcaption}
\usepackage{placeins}
\usepackage{microtype}
\usepackage{xspace}
\usepackage{url}

\newcommand{\eps}{\varepsilon}
\newcommand{\R}{\mathbb{R}}
\newcommand{\E}{\mathbb{E}}
\newcommand{\zhat}{\widehat{z}}

\DeclareMathOperator{\logit}{logit}

\newtheorem{lemma}{Lemma}
\newtheorem{proposition}{Proposition}
\newtheorem{corollary}{Corollary}

\title[Mechanistic Tomography]{Mechanistic Tomography: Designed Measurement for Control-Oriented Interpretability}
\author{Vijay Erramilli}
\email{evijay@gmail.com}

\begin{document}

\begin{abstract}
Mechanistic interpretability seeks quantities that a model does not expose directly: represented states, component effects, interactions, and responses to interventions. Patching, gradients, Hessian-vector products, and subset interventions obtain different measurements under different access assumptions and may target different quantities. We formulate their shared measurement problem as \textbf{mechanistic tomography}: the design and analysis of measurements for recovering internal mechanisms and intervention effects.

For a chosen basis and intervention family, the measurements take the form
\[
\widetilde{y}=Ax+w,
\]
where each row of $A$ describes an intervention, $x$ is the map we want to recover, and $w$ contains nonlinear response, sampling error, and basis misspecification. This gives methods with different targets a common set of questions and a practical sequence: start with the least costly measurements, test on held-out interventions at the intended scale, calibrate a simple mismatch, and expand the family when a structured residual remains. We formalize these decisions through results on measurement error, perturbation design, calibration, and interaction recovery.

The formulation is not specific to control, but control provides a demanding validation setting: once an estimate guides an intervention, it acts as an observer. In a two-HMM belief-state model, control error rises with observer error under a fixed controller and actuator, while target improvement can hide movement of a nuisance state. Under forward-only access, sparse aggregate measurements recover a finite-effect map with fewer interventions than exhaustive coordinate patching. With gradient access, a few finite probes substantially improve a local attribution map, although some held-out gaps remain. Lifted measurements and designed Hessian-vector products recover pair interactions that first-order maps miss, while Tracr shows that the required family depends on the basis. On GPT-2-small IOI, the measurements reproduce conditional backup and identify the Name Mover--Negative Name Mover interaction as the largest held-out predictive term among the three tested cross-group pairs. On Qwen-2.5-7B, a calibrated additive map reaches held-out $R^2=.983$ on a finite refusal-response surface; pairwise lifting gives no detected MAE improvement. The two pretrained-model studies illustrate both branches of the procedure: add interactions when held-out error requires them, and stop at the simpler family when it does not.
\end{abstract}

\maketitle

\section{Introduction}

\begingroup
\emergencystretch=1em
Mechanistic interpretability often asks a simple question: how much does an internal component contribute to a model's output? The model does not report that effect directly, so we have to measure it. Coordinate patching changes one component at a time. Attribution patching uses gradients to estimate many small local effects. Subset interventions change several components together, while Hessian-vector products probe interactions. These methods use different forms of access and may estimate different quantities, but they face the same basic problem: recovering an internal effect from incomplete measurements.
\par
\endgroup

We formulate that problem as $\widetilde y=Ax+w$: each row of $A$ records an intervention, $x$ is the map\footnote{x can be a vector or a matrix depending on which method of interventions were used; we use map in this paper} we want to recover, and $w$ collects what the linear prediction misses. We call the design and analysis of these measurements mechanistic tomography.

Writing the problem this way serves two purposes. The first is structural: it gives methods with different targets a common measurement language. Coordinate patching measures finite one-component effects, attribution patching returns a local gradient map, subset interventions give aggregate forward measurements, and Hessian-vector products measure local curvature~\cite{zhang2024patching,syed2024attrib,bair2026compressed,zhang2026lies}. Their targets remain different. What they share is a set of questions: what is the unknown map, how are the measurements obtained, which interventions do those measurements represent, what does the residual contain, and how will the result be checked? Sec.~\ref{sec:formulation} states this common form and derives the conditions under which its measurements are useful.

The second purpose is operational: the common form enables us to explore how an existing method can do more. For instance, if an effect map is sparse or compressible, aggregate measurements can replace exhaustive one-component probes. A few finite measurements can test whether a local gradient map needs only a simple correction at the intended intervention scale. If a structured error remains on held-out interventions, the next measurement must add interactions, measure curvature, or change the basis. Calibration dimension names the point at which a simple correction stops being enough; it is one decision inside this larger procedure. Sec.~\ref{sec:decisions} develops this logic. It matters in real transformers because backup, suppression, redundancy, and self-repair make the effect of one component depend on the state of others~\cite{wang2022ioi,mcgrath2023hydra,mcdougall2023copy}.

Current mechanistic interpretability benefits from a strong architectural prior: researchers know where to look because people chose the layers, attention heads, residual streams, and token positions. If models increasingly help design their successors, those conventions may become less transparent to the people evaluating them. This strengthens the case for a measurement-first approach whose claims rest on declared interventions and held-out responses rather than on a familiar blueprint.

We call an estimate an \emph{observer} when it guides an intervention. An observer must do more than reconstruct a mechanism: it must predict what an intervention will actually do at the scale where it will be used---a requirement Sec.~\ref{sec:formulation} develops. Control makes observer error visible: it can change the action taken, and successful target control can still move an unwanted state. Sec.~\ref{sec:control} tests both claims while holding the model, controller, and actuator fixed where required. This use of control follows recent work that evaluates interpretability through the interventions it enables and the evidence supporting them~\cite{bhalla2025unifying,orgad2026actionable}.

Mechanistic tomography begins after a candidate basis and observer family have been proposed; Sec.~\ref{sec:limits} explains what the formulation cannot determine on its own.

\subsection*{Contributions}

\begin{enumerate}[leftmargin=*,itemsep=2pt]
    \item \textbf{Structural unification.} Table~\ref{tab:methodmap} places coordinate patching, attribution patching, subset recovery, Hessian-vector products, and lifted recovery in one measurement language. The methods keep their own targets and numerical answers; the framework unifies how their measurements and errors are described.
    \item \textbf{Measurement and extension theory.} Lemma~\ref{lem:measurement-reduction} separates nonlinear response, sampling error, and basis error. The remaining results ask when a target can be recovered, how intervention scale and mask density affect the measurements, and when first-order measurements must give way to interaction-aware measurements.
    \item \textbf{Calibration dimension.} Proposition~\ref{prop:caldim} defines the smallest declared correction family that lets a baseline map predict finite responses on held-out interventions. An $r$-dimensional correction needs $O(r)$ well-conditioned probes; a residual outside that family calls for new features or a new basis. Sec.~\ref{sec:step0} tests the dimension-one case: at the larger intervention scale, one fitted gain raises mean held-out AtP $R^2$ from $.818$ to $.960$, close to sparse recovery at $.969$.
    \item \textbf{Control-facing validation.} In a two-HMM wind tunnel, closed-loop target error tracks observer error across ten observer variants (Spearman $0.95$) under a fixed controller and actuator. A separate specificity test shows why target improvement alone is not enough: it can coexist with increased movement of a nuisance state.
    \item \textbf{Operational extensions.} Secs.~\ref{sec:aggregate}--\ref{sec:qwen} follow the procedure under progressively weaker ground truth. Aggregate measurements replace exhaustive coordinate probes when the map is compressible; finite probes calibrate a local gradient map; lifted and curvature measurements recover interactions; Tracr exposes the role of the basis; and GPT-2-small IOI shows when an interaction term is needed. A held-out Qwen-2.5-7B response surface gives the complementary stopping result: finite calibration makes the additive map adequate, so pairwise lifting is not justified by the measured error.
\end{enumerate}

\section{Mechanistic tomography as an inverse problem}\label{sec:formulation}

How much a model's output changes when we alter one or more components\footnote{A component is any unit that can be intervened on; examples include an attention head, MLP block, sparse auto-encoder (SAE) feature, residual-stream direction, or circuit edge.} depends on how we make the measurement. A finite single-component change, a local gradient, and a multi-component intervention need not return the same value because transformers are nonlinear and component effects can depend on what else changes~\cite{zhang2024patching,syed2024attrib,bair2026compressed}. In this section we unify these different measurement methods under a single inverse problem formulation. 

To formalize mechanistic tomography, we first define what we observe, what we change, and over which inputs we average. Let $c\sim\mathcal{D}$ denote a model context, let $h_c$ denote the activation state at the intervention point, and let $f_c(h)$ be the scalar output metric obtained by continuing the model from activation state h under context c. For example, $f_c$ may be a logit difference between two candidate tokens. For each candidate component $i$, the direction $d_i(c)$ specifies how we intervene on that component, and $D_c=[d_1(c),\ldots,d_n(c)]$ represents the set of interventions. These candidate interventions form the coordinate system in which we will estimate component effects. Choosing that coordinate system is an experimental hypothesis; it does not imply that the model’s computation is inherently organized according to those coordinates.

All of these methods begin with the same physical operation. They choose some components, push the activation along their directions, and record how much the readout moves. A method is defined by which components it changes, how far it moves them, and how it turns the measured responses into a map of component effects.

Before comparing these measurements, we need a convention for turning a choice of components into an actual activation change. Without one, a measurement that touches many components may look stronger simply because it changed more of them. Let $z$ be a mask that activates an expected fraction $\rho$ of the coordinates. We choose a normalization $s_\rho>0$ and write $a=z/s_\rho$ for the coefficients applied to the component directions. For example, $s_\rho=\sqrt{\rho n}$ makes the rows of the design approximately unit norm. The scalar $\eps$ sets the overall size of the push. We divide the observed change by $\eps$ so that measurements taken at different scales are reported in the same units:
\begin{equation}
    \widetilde y(a) = \frac{1}{\eps}\E_c\left[f_c(h_c+\eps D_c a)-f_c(h_c)\right].
    \label{eq:finite}
\end{equation}

Equation~\eqref{eq:finite} gives one observed response to one intervention design. The quantity we want, however, is the underlying effect map $x$: a description of how the chosen components affect the scalar readout. In general, we cannot observe this map directly. A design $a$ gives one scalar view of it, with $a^\top x$ predicting the response and $w(a)$ recording what that prediction misses. In practice, we estimate $\widetilde y(a)$ by applying the same design across many contexts. Repeating that design improves the estimate but does not reveal new parts of $x$. Recovering a multi-component map under forward-only access requires different designs $a_1,\ldots,a_m$; their rows form $A$~\cite{bair2026compressed}. Recovering the hidden map from these designed partial views is the tomography problem.

Together, the measurements give the standard form used in network tomography and compressed sensing~\cite{vardi1996tomo,candes2006robust,firooz2010cs}:
\begin{equation*}
    \widetilde y = A x + w. \tag{MT}
\end{equation*}
For a first-order map, $w$ includes curvature, finite-sample error, and error from the chosen basis. Unlike independent noise in a textbook inverse problem, this residual depends on the design: changing $A$ changes both what we measure and what we do to the model.

This is a measurement model, not a claim that the transformer is linear; Sec.~\ref{sec:decisions} asks when it remains adequate at the intended intervention scale.

When an effect map guides an intervention, it acts as an observer. It must then do more than describe the measurements already made: it must predict the response to a held-out action at the scale where that action will be used. This makes $w$, $\eps$, and $\rho$ part of the measurement design rather than reporting details. Sec.~\ref{sec:control} tests this requirement in a control loop, and Sec.~\ref{sec:discussion} turns it into a practical workflow.

The mapping is easiest to see method by method:

\begin{itemize}[leftmargin=*,itemsep=4pt]
    \item \textbf{Coordinate patching.} A one-component probe gives the row $e_i$. If $x$ is the finite singleton map, then $A=I$ and the measurement reads $x_i$ directly. If $x$ is the local first-order map, the finite response is $\widetilde y(e_i)=x_i+w(e_i,\eps)$. A subset intervention uses a row with several nonzero entries; stacked rows recover the additive map jointly, while nonadditivity, sampling error, and basis error remain in $w$.

    \item \textbf{Hessian measurements.} A Hessian-vector product makes local curvature $H_0$ the target and measures its projections along chosen directions. With forward-only access, the analogous lifted design adds columns $a_i a_j$ and pair coefficients to $x$. Both take the form $\widetilde y=Ax+w$; $w$ contains response beyond the chosen interaction order or basis.

    \item \textbf{Attribution patching.} Shrinking $\eps$ toward zero defines the local first-order map
    \begin{equation}
        x_i = \E_{c\sim\mathcal{D}}\left[\left.\partial_\delta f_c(h_c+\delta d_i(c))\right|_{\delta=0}\right].
        \label{eq:firstorder}
    \end{equation}
    One backward pass returns all coordinates of $x$~\cite{syed2024attrib}. For this local estimand on the measured contexts, $\widetilde y=Ix$ and $w=0$: the inverse problem is degenerate. At finite scale, however, the gradient supplies no information about $w$; Sec.~\ref{sec:step0} uses finite calibration probes to measure it.
\end{itemize}

Table~\ref{tab:methodmap} summarizes this shared measurement structure without implying that the methods use the same basis, estimate the same quantity, or return the same numerical answer.

\begin{table}[t]
\centering
\caption{Mechanistic tomography maps each method to a target, access path, and measurement design.}
\label{tab:methodmap}
\scriptsize
\renewcommand{\arraystretch}{1.08}
\setlength{\tabcolsep}{2pt}
\begin{tabular}{p{0.27\linewidth}p{0.28\linewidth}p{0.35\linewidth}}
\toprule
Method and access & Declared target & Induced measurement \\
\midrule
Coordinate patching; forward & finite singleton effects & one-coordinate rows \\
Subset recovery; forward & additive finite map & multi-coordinate rows; joint fit \\
AtP; backward & local first-order map $x$ & gradient returns coordinates \\
Lifted recovery; forward & finite main and pair effects & columns include $a_i$ and $a_i a_j$ \\
HVP; white-box second order & local curvature $H_0$ & directions give curvature projections \\
\bottomrule
\end{tabular}
\end{table}
\FloatBarrier

\subsection{What the formulation lets us decide}\label{sec:decisions}

The inverse formulation separates three decisions that arise in any measurement-based study of mechanism:
\begin{enumerate}[leftmargin=*,label=\arabic*.]
    \item Does the proposed linear relation describe finite interventions at the scale where the estimate will be used?
    \item If it does, which measurement design identifies the target with acceptable cost and error?
    \item If the relation fails on held-out interventions, can a small calibration repair it, or must the measurement family be enlarged?
\end{enumerate}
These are the natural questions because they are the three stages of an inverse measurement procedure: model the measurements, recover the target, and diagnose the model when it fails.

A map should be tested in the operating region where it will be used, rather than only where it is easiest to measure. For an observer, that region is set by the intervention scale and the contexts the controller will encounter. Held-out finite interventions therefore test whether the map is adequate for action.

Reconstruction and downstream action are separate criteria because an actuator weights errors unevenly. Error outside the actuator's effective directions may have little control cost, while error along those directions reaches the loop directly. Sec.~\ref{sec:result-control} demonstrates this separation; the same distinction appears in network tomography~\cite{roughan2003te}.

The recovery tools come from classical inverse problems, sparse recovery, masked interaction recovery, and HVP correction~\cite{bair2026compressed,kang2025spex,butler2025proxyspex,zhang2026lies}. The new element is that each measurement is also a finite intervention on a nonlinear model, so the design changes the residual it must control.

\noindent\textbullet\quad\textbf{Does one finite intervention produce a linear measurement?}\par
\noindent The inverse formulation would be empty if we simply assumed that every intervention followed it. We instead derive the relation from the response to one finite intervention. The lemma below shows that a mask $a$ produces a weighted sum of the local component effects, $a^\top x$, plus three identifiable sources of error.

\begin{lemma}[Mechanistic measurement reduction]\label{lem:measurement-reduction}
Let $f_c$ be twice differentiable on the perturbation ball. Assume its Hessian satisfies $\|\nabla^2 f_c\|_{op}\le L$. Define the population normalized response
\begin{equation}
    y_\infty(a,\eps)
    =
    \frac{1}{\eps}
    \E_c\!\left[
    f_c(h_c+\eps D_ca)-f_c(h_c)
    \right].
\end{equation}
Suppose the response estimated from a batch of $B$ paired contexts can be written as
\begin{equation}
    \widetilde y_B(a,\eps)
    =
    y_\infty(a,\eps)
    +\frac{e_B(a)}{\eps}
    +\zeta_B(a,\eps),
\end{equation}
where $|e_B(a)|\le\tau_B(a)$ bounds uncancelled error in the response numerator and $|\zeta_B(a,\eps)|\le\nu_B(a)$ bounds the remaining normalized sampling error. Then
\begin{equation}
    \widetilde y_B(a,\eps)
    =
    a^\top x+w(a,\eps),
\end{equation}
where $x$ is the local first-order map in Equation~\eqref{eq:firstorder} and
\begin{equation}
    |w(a,\eps)| \le \frac{L}{2}\eps\E_c\|D_ca\|_2^2
    + \frac{\tau_B(a)}{\eps}+\nu_B(a).
\end{equation}
Stacking measurements from several masks gives $\widetilde y=Ax+w$. The proof is in Appendix~\ref{app:proof-measurement}.
\end{lemma}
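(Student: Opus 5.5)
The plan is to split the estimated response into the population response plus the two sampling terms, then expand the population response by Taylor's theorem along the segment from $h_c$ to $h_c+\eps D_ca$. Define
\[
w(a,\eps)=\widetilde y_B(a,\eps)-a^\top x ,
\]
and write it as
\[
w(a,\eps)=\bigl(y_\infty(a,\eps)-a^\top x\bigr)+\frac{e_B(a)}{\eps}+\zeta_B(a,\eps).
\]
The hypotheses bound the last two terms directly by $\tau_B(a)/\eps$ and $\nu_B(a)$. The triangle inequality then reduces the lemma to showing
\[
|y_\infty(a,\eps)-a^\top x|\le \tfrac{L}{2}\eps\,\E_c\|D_ca\|_2^2 .
\]

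For that term I fix a context $c$ and set $g(t)=f_c(h_c+tD_ca)$ for $t\in[0,\eps]$. Taylor's theorem with integral (or Lagrange) remainder gives
\[
f_c(h_c+\eps D_ca)-f_c(h_c)=\eps\,\nabla f_c(h_c)^\top D_ca+R_c ,
\]
with
\[
R_c=\int_0^\eps(\eps-t)\,(D_ca)^\top\nabla^2 f_c(h_c+tD_ca)\,(D_ca)\,dt .
\]
The operator-norm bound then gives $|R_c|\le \tfrac{L}{2}\eps^2\|D_ca\|_2^2$. This uses only that the segment lies in the perturbation ball where the Hessian bound holds, which is the stated assumption.

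Next I identify the linear term with $a^\top x$. Linearity of the gradient gives
\[
\nabla f_c(h_c)^\top D_ca=\sum_i a_i\,\partial_\delta f_c(h_c+\delta d_i(c))\big|_{\delta=0}.
\]
Taking $\E_c$, linearity of expectation together with Equation~\eqref{eq:firstorder} turns this into $\sum_i a_ix_i=a^\top x$. Dividing by $\eps$ and applying Jensen's inequality, $|\E_cR_c|\le\E_c|R_c|$, yields the required bound on the population term.

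The only delicate point is integrability. To exchange expectation with the sum and to make $\E_c R_c$ meaningful, I will assume that the gradient terms are integrable and that $\E_c\|D_ca\|_2^2<\infty$. This is implicit in $x$ and the bound being finite, and if the right-hand side is infinite the bound holds trivially.

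Stacking then follows row by row: for masks $a_1,\dots,a_m$, set $A$ to have rows $a_k^\top$, $\widetilde y_k=\widetilde y_B(a_k,\eps)$ and $w_k=w(a_k,\eps)$. This gives $\widetilde y=Ax+w$, with each entry of $w$ obeying its own bound. I expect no real obstacle beyond being careful that the Hessian bound applies along the whole segment, and noting that the factor $\eps$ in the curvature term comes from dividing the $\eps^2$ remainder by $\eps$.
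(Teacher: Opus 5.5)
Your proposal is correct and follows the paper's proof essentially step for step: Taylor-expand along the segment from $h_c$ to $h_c+\eps D_ca$, identify the context-averaged tangent term with $a^\top x$ via Equation~\eqref{eq:firstorder}, bound the normalized curvature remainder by $\frac{L}{2}\eps\E_c\|D_ca\|_2^2$, add the two sampling terms by the triangle inequality, and stack the rows. The only differences are cosmetic: you use the integral rather than the Lagrange form of the remainder, and you state explicitly the integrability assumption that the paper leaves implicit.
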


A simple example makes the result concrete. Suppose a mask changes only heads 2 and 5, with coefficients $a_2$ and $a_5$. The linear part of the measured response is
\[
    a^\top x=a_2x_2+a_5x_5.
\]
The intervention does not reveal either effect separately. It returns one weighted sum of them. Several masks produce several such equations, and solving those equations recovers the individual effects. This is the same logic used in network tomography and compressed sensing.

The residual $w$ records why the measured transformer response may differ from that weighted sum. Its three terms describe three different problems. The curvature term grows with $\eps$: a larger intervention moves farther from the local tangent and encounters more of the model's nonlinear response. The term $\tau_B/\eps$ grows when $\eps$ becomes too small: any error already present in the response numerator is magnified when we divide by the intervention size. The remaining term $\nu_B$ records normalized sampling error that remains after pairing or batching.

The lemma therefore rules out both extremes. We cannot make the intervention arbitrarily large, because curvature then dominates. We cannot make it arbitrarily small when numerator error remains, because normalization magnifies that error. Pairing the clean and intervened evaluations helps because it can cancel shared variation before division by $\eps$.

The mask affects more than the row of $A$. It also changes the displacement $\|D_ca\|$, the amount of curvature encountered, and potentially the sampling error. The design matrix and the residual are therefore coupled. This is the main difference from a textbook linear inverse problem with independent noise added after the measurements have been chosen.

Lemma~\ref{lem:measurement-reduction} establishes the local measurement relation; it does not establish identifiability, sparsity, or design-independent noise. The results below address those separate questions.

\noindent\textbullet\quad\textbf{Can a small correction repair the finite response?}\par
\noindent For a nondifferentiable endpoint---for example, a behavioral rating, tool outcome, or black-box score---we cannot use the Taylor bound directly. In that setting the finite response is the quantity of interest, and held-out interventions provide the relevant test. The remaining question is whether the difference from the gradient map is a simple scale correction or evidence that the first-order family is missing structure. Calibration dimension formalizes that distinction.

A useful first check is whether all finite effects are approximately a rescaled version of the local map. If so, a few finite probes may be enough to estimate the correction. If not, collecting more probes for the same family will not solve the problem.

\begin{proposition}[Calibration dimension]\label{prop:caldim}
Let $F_\eps(a)$ denote the systematic finite-intervention response on a design family $\mathcal A$, and let $\mathcal Q$ be a declared held-out distribution on that family. Fix a declared feature map $\phi(a)\in\R^q$, baseline $x_0\in\R^q$, and correction dictionary $B\in\R^{q\times r}$. If
\begin{equation}
    F_\eps(a)=\phi(a)^\top(x_0+B\theta_\eps)
    \quad\text{for all } a\in\mathcal A
\end{equation}
and the sampled correction design $\Phi B$ has full column rank, then $\theta_\eps$ is identifiable from $O(r)$ noiseless calibration measurements and stably estimable according to the conditioning of $\Phi B$, where row $j$ of $\Phi$ is $\phi(a_j)^\top$. If no such $\theta$ exists, the residual
\begin{equation}
    \inf_\theta \|F_\eps(\cdot)-\phi(\cdot)^\top(x_0+B\theta)\|_{L_2(\mathcal Q)}
\end{equation}
lower-bounds the error of every observer using that family.
\end{proposition}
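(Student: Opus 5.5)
The plan has two parts. The first is a linear-algebra identifiability argument for the realizable case. The second is a projection argument for the misspecified case.

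\textbf{Realizable case.} Take calibration designs $a_1,\ldots,a_m\in\mathcal A$ with $m\ge r$, and let the noiseless measurements be $y_j=F_\eps(a_j)$. By hypothesis,
\[
y-\Phi x_0=(\Phi B)\theta_\eps .
\]
This is a linear system in $r$ unknowns, and the left-hand side is known because $x_0$, $\phi$ and $B$ are declared. If $\Phi B$ has full column rank, the map $\theta\mapsto\Phi B\theta$ is injective. Hence
\[
\theta_\eps=(\Phi B)^{+}(y-\Phi x_0),
\]
and it is the unique solution. Full column rank requires at least $r$ rows. Conversely, $r$ generically chosen designs suffice whenever the rows $\phi(a)^\top B$, as $a$ ranges over $\mathcal A$, span $\R^r$. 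This gives the $O(r)$ count.

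For stability I will perturb the measurements to $y+\eta$. The least-squares error is then $(\Phi B)^{+}\eta$, so
\[
\|\hat\theta-\theta_\eps\|\le\|\eta\|/\sigma_{\min}(\Phi B).
\]
The relative error is controlled by $\kappa(\Phi B)$ in the usual way. This is what "stably estimable according to the conditioning" should mean.

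\textbf{Misspecified case.} Here I read "observer using that family" as any predictor of the form
\[
\hat F(a)=\phi(a)^\top(x_0+B\hat\theta),
\]
where $\hat\theta$ may be chosen by any procedure, from any number of probes. For each fixed $\hat\theta$,
\[
\|F_\eps-\hat F\|_{L_2(\mathcal Q)}\ge\inf_\theta\|F_\eps-\phi^\top(x_0+B\theta)\|_{L_2(\mathcal Q)}
\]
holds trivially, since $\hat\theta$ is one candidate in the infimum. Because the bound does not depend on how $\hat\theta$ was obtained, the same inequality holds for data-dependent estimates, pointwise in the data. The bound therefore also holds in expectation over sampling.

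The infimum is attained, because it is the distance from $F_\eps-\phi^\top x_0$ to the finite-dimensional (hence closed) affine subspace $\{\phi^\top B\theta\}$ in $L_2(\mathcal Q)$. I will assume $\phi\in L_2(\mathcal Q)$ so that this subspace makes sense. If no $\theta$ gives exact equality on $\mathcal A$, the residual is strictly positive. One caveat: this strict positivity needs the failure to hold on a set of positive $\mathcal Q$-measure, and I will state that as a remark. This also explains why more probes cannot help. Adding rows only shrinks the estimation error toward the projection. It cannot move the target $F_\eps$ closer to the span.

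\textbf{Main obstacle.} The main difficulty is interpretive rather than technical. The $O(r)$ claim needs an assumption that the correction directions are excited by the design family, namely that $\mathrm{span}\{B^\top\phi(a):a\in\mathcal A\}=\R^r$. The lower bound needs the convention that "observer using that family" means predictions restricted to the affine class. I will state both conventions explicitly at the start of the proof, and I will note that observers outside the family (new features or a new basis) are exactly the ones the bound does not cover.
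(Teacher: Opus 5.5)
Your proposal is correct and follows essentially the same route as the paper. Both reduce calibration to the $r$-unknown system $y-\Phi x_0=(\Phi B)\theta_\eps$, use full column rank for uniqueness, bound the least-squares error by $\|(\Phi B)^{\dagger}\|_2\|\nu\|_2 = \|\nu\|_2/\sigma_{\min}(\Phi B)$, and note that any in-family observer is one candidate in the infimum. Your extra remarks are refinements the paper leaves implicit, not a different argument: attainment of the infimum, the positive-$\mathcal Q$-measure caveat for strict positivity, and the excitation condition behind the $O(r)$ count.
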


For a declared nested family $B_r$ and a tolerance $\delta$ at or above the estimated noise floor, the calibration dimension is the smallest $r$ whose held-out error is at most $\delta$. Dimension zero means that the baseline map transfers without correction. Dimension one allows one scalar gain. Larger values allow several stated corrections, such as separate gains for component groups. If the residual lies outside the entire first-order family, no collection of gains can fix it; the observer needs new features, such as component pairs. Sec.~\ref{sec:step0} tests the one-gain case; Sec.~\ref{sec:interactions} supplies a case in which pair features are necessary.

Appendix~\ref{app:calibration} gives concrete examples and the proof.

This definition gives a practical stopping rule. Start with a gradient map when a backward pass is available; otherwise start with an additive map estimated from forward interventions. Fit the correction on one set of masks and evaluate it on another. The held-out split matters because curvature and correlated masks can make an inadequate family fit its training measurements. In the ideal noiseless case, an $r$-dimensional correction needs only $O(r)$ well-conditioned probes instead of re-estimating $n$ main effects or $O(n^2)$ pairs. Move to a richer family only when the held-out residual remains above the estimated noise floor and the richer family improves it reproducibly.

\noindent\textbullet\quad\textbf{How should scale and mask density be chosen?}\par
\noindent After choosing an observer family, we still need to choose the interventions used to measure it. Two design variables matter immediately. The scale $\eps$ controls how far each probe moves the model, while the density $\rho$ controls how many coordinates each mask changes. Small probes reduce nonlinear error but magnify uncancelled response noise. Dense masks may cover sparse features more efficiently, but they can also move the activation farther or make the inverse problem poorly conditioned.

\begin{corollary}[Noisy first-order scale--density bound]\label{cor:scale-density}
For a mask family $\mathcal A_\rho$ under a fixed actuator normalization, define
\begin{equation}
    \kappa_\rho^2 = \sup_{a\in\mathcal A_\rho}\E_c\|D_ca\|_2^2,
\end{equation}
and let $\tau_B(\rho)$ and $\nu_B(\rho)$ uniformly bound the two sampling terms in Lemma~\ref{lem:measurement-reduction}. If $C_{\mathrm{rec}}(A_\rho)$ bounds the selected recovery method's amplification of measurement error, a conditional first-order error objective is
\begin{equation}
\begin{aligned}
    \mathcal J_1(\eps,\rho)
    &= C_{\mathrm{rec}}(A_\rho)
    \left[
    \frac{L}{2}\eps\kappa_\rho^2
    +\frac{\tau_B(\rho)}{\eps}+\nu_B(\rho)
    \right],\\
    (\eps^*,\rho^*)&\in\arg\min_{\eps,\rho}\mathcal J_1(\eps,\rho).
\end{aligned}
\end{equation}
For fixed $\rho$, if $\tau_B(\rho)>0$ and $C_{\mathrm{rec}},\nu_B$ are locally independent of $\eps$, the first two terms are minimized at
\begin{equation}
    \eps^*(\rho)=\sqrt{\frac{2\tau_B(\rho)}{L\kappa_\rho^2}}.
\end{equation}
Density affects both model-induced error through $\kappa_\rho$ and identifiability through coverage and conditioning.
\end{corollary}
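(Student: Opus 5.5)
The plan is to obtain the objective $\mathcal J_1$ by feeding the uniform version of Lemma~\ref{lem:measurement-reduction} into the recovery amplification bound, and then to minimize over $\eps$ with a one-variable calculus argument.

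\emph{Uniform residual bound.} Fix $\rho$ and any mask $a\in\mathcal A_\rho$. Lemma~\ref{lem:measurement-reduction} gives
\[
|w(a,\eps)|\le \tfrac{L}{2}\eps\E_c\|D_ca\|_2^2+\tau_B(a)/\eps+\nu_B(a).
\]
Replace $\E_c\|D_ca\|_2^2$ by its supremum $\kappa_\rho^2$, and replace the sampling bounds by the uniform bounds $\tau_B(\rho)$ and $\nu_B(\rho)$. This yields
\[
\sup_{a\in\mathcal A_\rho}|w(a,\eps)|\le \tfrac{L}{2}\eps\kappa_\rho^2+\tau_B(\rho)/\eps+\nu_B(\rho).
\]
The bound is uniform over every row of $A_\rho$, so it controls the entrywise ($\ell_\infty$) size of the stacked residual vector $w$.

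\emph{Recovery error.} By definition, $C_{\mathrm{rec}}(A_\rho)$ bounds how much the chosen recovery method amplifies measurement error. Applied to $\widetilde y=A_\rho x+w$, it bounds $\|\widehat x-x\|$ by $C_{\mathrm{rec}}(A_\rho)$ times the size of $w$. Here one has to be consistent about norms. I will state $C_{\mathrm{rec}}$ with respect to the $\ell_\infty$ size of $w$, or absorb a factor such as $\sqrt m$ into $C_{\mathrm{rec}}$. With that convention the recovery error is at most $\mathcal J_1(\eps,\rho)$. The objective is therefore a valid upper bound, conditional on the first-order model, and $(\eps^*,\rho^*)$ minimizes that bound. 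This step is mostly a matter of matching these conventions; it is the only place the argument needs care, because the corollary leaves the norm in which $C_{\mathrm{rec}}$ acts implicit.

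\emph{Optimal scale.} Fix $\rho$ and treat $C_{\mathrm{rec}}$ and $\nu_B$ as constant in $\eps$. Minimizing $\mathcal J_1$ then reduces to minimizing $g(\eps)=\alpha\eps+\beta/\eps$ on $\eps>0$, with $\alpha=L\kappa_\rho^2/2$ and $\beta=\tau_B(\rho)>0$. I assume $L\kappa_\rho^2>0$; otherwise the bound decreases monotonically in $\eps$.
\begin{itemize}
\item $g$ is strictly convex on $(0,\infty)$, since $g''(\eps)=2\beta/\eps^3>0$.
\item $g(\eps)\to\infty$ both as $\eps\to 0^+$ and as $\eps\to\infty$.
\item Setting $g'(\eps)=\alpha-\beta/\eps^2=0$ gives the unique minimizer $\eps^*=\sqrt{\beta/\alpha}=\sqrt{2\tau_B(\rho)/(L\kappa_\rho^2)}$.
\end{itemize}
An equivalent route is AM--GM, $\alpha\eps+\beta/\eps\ge 2\sqrt{\alpha\beta}$, with equality at the same point.

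\emph{Role of density.} The final sentence of the corollary is an interpretation rather than a further claim. I will justify it by pointing out where $\rho$ enters the bound. It appears in the model-induced term through $\kappa_\rho^2$, since denser masks generally displace the activation further. It appears in the recovery term through $C_{\mathrm{rec}}(A_\rho)$, which reflects the coverage and conditioning of $A_\rho$.
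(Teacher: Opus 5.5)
Your proposal is correct and follows the paper's proof essentially step for step: bound the curvature term by $\kappa_\rho^2$, add the uniform sampling bounds, multiply by $C_{\mathrm{rec}}(A_\rho)$, and minimize $\frac{L}{2}\eps\kappa_\rho^2+\tau_B(\rho)/\eps$ by setting the derivative to zero. Your extra points are refinements of the same route, not a different argument; the paper leaves them implicit. These are the norm in which $C_{\mathrm{rec}}$ acts, the convexity check showing the critical point is a minimum, and the degenerate case $L\kappa_\rho^2=0$.
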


The bound guides what a scale--density study must measure; it is not a post-hoc derivation of the settings used here. Secs.~\ref{sec:aggregate}--\ref{sec:interactions} vary scale, budget, and noise separately rather than estimating $L$, $\tau_B$, or the full $\eps\times\rho$ surface. Sec.~\ref{sec:discussion} returns to this limitation.

Appendix~\ref{app:scale-density} explains the scale and density terms and gives the proof.

\noindent\textbullet\quad\textbf{What does the bound show?}\par
\noindent The square-root dependence of the preferred scale comes from classical noisy forward differences~\cite{shi2022finite}. Mechanistic interventions add two complications. First, the mask family changes the activation displacement as well as the coverage of the unknown components. Second, the same family changes the conditioning of the recovery problem. The term $\tau_B$ represents response error that is present before division by $\eps$ and does not cancel between the clean and intervened evaluations; $\nu_B$ represents the paired sampling error left after normalization. Mask density has no meaning without an actuator convention. Under fixed per-coordinate amplitude, a denser mask usually causes a larger displacement. Under unit-row normalization, it need not. For pair or Hessian targets, some quadratic response becomes signal rather than error, but mask density still controls whether the pair features receive enough independent coverage.

\noindent\textbullet\quad\textbf{What follows from sparse recovery?}\par
\noindent If the component-effect map is sparse or compressible, aggregate masks can recover it with fewer forward evaluations than patching every coordinate. Once the measurement relation holds, basis pursuit and orthogonal matching pursuit (OMP) use the standard sparse-recovery guarantees, provided that the mask design is sufficiently well conditioned~\cite{candes2006robust,tropp2007omp,berinde2008expander}. Coordinate-only forward patching has a simple worst-case limitation: with fewer than $n$ measurements, at least one coordinate remains unmeasured, and the entire effect may be hidden there. This is a minimax statement, not a claim about average model behavior. It also applies only to forward-only recovery; a white-box gradient returns all first-order coordinates in one backward pass.

\noindent\textbullet\quad\textbf{Why do first-order maps miss interactions?}\par
\noindent Sparse recovery cannot recover a term that the measurement model does not contain. This matters for real transformers. Softmax attention, layer normalization, nonlinear MLPs, and downstream heads all allow the effect of one component to depend on the state of another. Backup heads may compensate for an ablated primary head; self-repair can reroute a computation; and copy-suppression heads act conditionally on what earlier heads have written~\cite{wang2022ioi,mcgrath2023hydra,mcdougall2023copy}. An additive model cannot express these dependencies. More first-order measurements only estimate the wrong family more accurately. Pair features are the smallest extension that can represent a two-component interaction.

\begin{proposition}[First-order blindness to pure interactions]\label{prop:blindness}
Let $H_{0,ij}=\partial^2 f/\partial d_i\partial d_j$ at the baseline. If $x_i=x_j=0$ but $H_{0,ij}\ne0$, gradients and singleton first-order measurements do not identify the cross term. At scale $\eps$, the normalized finite response has
\begin{equation}
    \widetilde y(a)=a^\top x+\sum_{i<j}a_i a_j\Gamma_{\eps,ij}+w_\eps(a),
\end{equation}
where $\Gamma_{\eps,ij}=\eps H_{0,ij}+O(\eps^2)$. Lifted features $a_i a_j$ make these finite pair coefficients visible.
\end{proposition}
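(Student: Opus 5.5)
The plan is to prove the two claims of Proposition~\ref{prop:blindness} separately: first the non-identifiability of the cross term from first-order data, then the expansion of the finite response.

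\textbf{Non-identifiability.} I would use a two-model construction. Fix a context-averaged readout $f$, and define a second readout
\[
g(h)=f(h)+\gamma\,\ell_i(h)\,\ell_j(h),
\]
where $\ell_i,\ell_j$ are affine functionals. They are chosen so that along the direction $d_i$ the map $\ell_i$ is linear with unit slope and vanishes at the baseline, and likewise for $\ell_j$ along $d_j$.

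The perturbation has zero value and zero gradient at $h_c$, because each factor vanishes there. So the gradient map $x$ of Equation~\eqref{eq:firstorder}, and hence every attribution-patching measurement, is identical for $f$ and $g$.

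Singleton first-order measurements $a=e_k$ probe only one direction at a time. Along a single direction one factor is identically zero, so the perturbation contributes nothing. The singleton responses therefore also agree.

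The mixed second derivative, however, differs: $H^{g}_{0,ij}=H^{f}_{0,ij}+\gamma$. Any estimator built from these data must therefore return the same cross term for two models whose cross terms differ, so the cross term is not identified. The hypothesis $x_i=x_j=0$ is the case where this is sharpest: the data show no first-order signal at all, yet $H_{0,ij}\neq0$.

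\textbf{Finite-response expansion.} I would strengthen the regularity assumption of Lemma~\ref{lem:measurement-reduction} to $f_c\in C^3$ on the perturbation ball, with uniformly bounded third derivatives. Write $u=D_ca$ and Taylor-expand:
\[
f_c(h_c+\eps u)-f_c(h_c)=\eps\nabla f_c^\top u+\tfrac{\eps^2}{2}u^\top\nabla^2 f_c\,u+O(\eps^3\|u\|^3).
\]
Next expand the quadratic term in the component coordinates, $u^\top\nabla^2 f_c\,u=\sum_{k,l}a_ka_l\,d_k^\top\nabla^2 f_c\,d_l$, and split it into diagonal terms $k=l$ and off-diagonal pairs $k<l$. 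Each pair appears twice, which gives the coefficient $2$ before multiplying by $\tfrac12$. Dividing by $\eps$ and taking $\E_c$ gives:
\begin{itemize}
\item the linear part $a^\top x$;
\item the pair part $\sum_{i<j}a_ia_j\,\eps H_{0,ij}$, where $H_{0,ij}=\E_c[d_i^\top\nabla^2 f_c\,d_j]$;
\item the diagonal terms $\tfrac{\eps}{2}\sum_i a_i^2H_{0,ii}$, which go into $w_\eps(a)$ (or, if $a$ is a binary mask, into a rescaled main effect, since then $a_i^2=a_i$).
\end{itemize}
Sampling error from Lemma~\ref{lem:measurement-reduction} also goes into $w_\eps$.

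\textbf{Main obstacle.} The difficulty is the $O(\eps^2)$ remainder, since it must be allocated in a way that makes $\Gamma_{\eps,ij}$ well defined. The cubic Taylor term contains monomials $a_ia_ja_k$. I would define the finite pair coefficient by projecting the response onto the lifted features. The cleanest route is to define $\Gamma_{\eps,ij}$ as a finite-difference interaction contrast,
\[
\frac{1}{\eps}\E_c\bigl[f_c(\cdot+\eps(d_i+d_j))-f_c(\cdot+\eps d_i)-f_c(\cdot+\eps d_j)+f_c(\cdot)\bigr].
\]
Its Taylor expansion is $\eps H_{0,ij}+O(\eps^2)$ by the bounded third derivative. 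Everything not captured by this contrast, including higher-order monomials and $k$-way terms, is then placed in $w_\eps$.

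The visibility claim follows because the lifted regression has columns $a_ia_j$. Given a design whose lifted matrix has full column rank, the argument of Proposition~\ref{prop:caldim}, with $\phi(a)$ taken to include the pair features, identifies $\Gamma_{\eps,ij}$.
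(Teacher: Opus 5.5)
Your proof is correct. It shares the paper's Taylor skeleton: expand to second order, collect each off-diagonal pair with its factor of two, divide by $\eps$, and read $a_ia_j$ as a new design column. It is more explicit exactly where the paper is informal.

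For non-identifiability, the paper only notes that gradients and first-order singleton measurements report $x_i=x_j=0$, and asserts that models with and without the cross term look alike. Your pair $f$ versus $f+\gamma\ell_i\ell_j$ is an actual witness, and it proves more than is asked. The added term vanishes on every singleton line, so even finite singleton responses at every scale coincide, and, as you note, $x_i=x_j=0$ is not needed for this part. That stronger claim requires the linear parts to be dual to the directions, $\nabla\ell_i^\top d_k=\delta_{ik}$, built per context around $h_c$; linear independence of the $d_k(c)$ suffices. Constraining $\ell_i$ only along $d_i$, as you wrote, does not force the other factor to vanish on the singleton lines. Under the paper's weaker first-order reading, any quadratic form in $h-h_c$ with a nonzero $(d_i,d_j)$ entry already works.

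For the expansion, the paper's $O(\eps^3)$ remainder tacitly needs more than the bounded Hessian of Lemma~\ref{lem:measurement-reduction}, and $\Gamma_{\eps,ij}$ is never defined beyond the Taylor cross coefficient. Your $C^3$ assumption and four-point contrast make $\Gamma_{\eps,ij}=\eps H_{0,ij}+O(\eps^2)$ a provable statement. For unit-amplitude $\{0,1\}$ masks, your contrast is exactly the pairwise M\"obius coefficient of the normalized response, which is what a main-plus-pair fit targets.

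The only overstatement is your last step. Proposition~\ref{prop:caldim} identifies $\Gamma_{\eps,ij}$ exactly only when the response lies in the lifted span, for example a quadratic response with binary masks, where the diagonal terms fold into the main effects. Otherwise its least-squares bound applies with $w_\eps$ as the noise. Three-way terms and sampling error then bias the pair estimate through $\|M^\dagger\|_2$, which is the conditional form the paper states in Proposition~\ref{prop:lifted}.
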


Appendix~\ref{app:blindness} gives the proof and shows why the failure comes from the measurement family, not the recovery algorithm.

The lifted model treats each product $a_i a_j$ as an additional measurement feature. It can make a pair visible, but it also creates many more columns: $n$ main effects become $n+\binom{n}{2}$ main-and-pair effects. A design that is well conditioned for the original $n$ columns may be badly conditioned after this expansion.

\begin{proposition}[Conditional lifted-recovery reduction]\label{prop:lifted}
Let $\theta_\eps=(x_\eps,\mathrm{vec}\,\Gamma_\eps)\in\R^{N}$ be sparse in a lifted finite main-plus-pair basis, and let $\Phi(A)$ be the corresponding lifted design with rows $[a,\{a_i a_j\}_{i<j}]$. If the lifted misspecification is bounded, sparse recovery gives the same form of stable error bound as classical sparse recovery, conditional on restricted conditioning of $\Phi(A)$ on the relevant support. A first-order restricted-isometry property (RIP) does not imply lifted RIP. Appendix~\ref{app:proof-lifted} gives the reduction to the standard sparse-recovery theorem.
\end{proposition}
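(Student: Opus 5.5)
The plan is to reduce Proposition~\ref{prop:lifted} to the standard stable sparse-recovery theorem by treating the lifted feature matrix as an ordinary design matrix.

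\textbf{Step 1: write the lifted measurements as a linear model.} Each finite measurement is
\[
\widetilde y(a_k)=\Phi(A)_{k\cdot}\,\theta_\eps+w^{\mathrm{lift}}(a_k).
\]
Here the residual $w^{\mathrm{lift}}$ collects three pieces: the cubic and higher response beyond pair order (the $O(\eps^2)$ terms in the expansion of Proposition~\ref{prop:blindness}), the sampling terms $\tau_B/\eps+\nu_B$ from Lemma~\ref{lem:measurement-reduction}, and basis misspecification. The hypothesis that lifted misspecification is bounded is used exactly here, as $\|w^{\mathrm{lift}}\|_2\le\eta$. The only modelling step is to take $\theta_\eps=(x_\eps,\mathrm{vec}\,\Gamma_\eps)$ as the declared finite target. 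This avoids needing a uniform Taylor remainder: whatever the expansion leaves over is absorbed into $\eta$. The stacked system then reads $\widetilde y=\Phi(A)\theta_\eps+w^{\mathrm{lift}}$, which is literally of the (MT) form with $A$ replaced by $\Phi(A)$ and $N=n+\binom n2$ columns.

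\textbf{Step 2: invoke the classical theorem.} If $\Phi(A)$, after column normalization, satisfies a restricted isometry or restricted eigenvalue condition of order $2s$ with a sufficiently small constant on the relevant support class, then basis pursuit denoising with tolerance $\eta$ returns $\widehat\theta$ with
\[
\|\widehat\theta-\theta_\eps\|_2\le C_0\,\sigma_s(\theta_\eps)_1/\sqrt s+C_1\eta,
\]
where $\sigma_s(\theta_\eps)_1$ is the best-$s$-term approximation error~\cite{candes2006robust}. The analogous OMP guarantee holds under its own coherence or RIP condition~\cite{tropp2007omp}. This is the ``same form of stable error bound'': the theorem is applied verbatim, with all dependence on the model carried by $\eta$ and by the conditioning of $\Phi(A)$. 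I would also note how the bound translates back to pair coefficients. Since $\Gamma_{\eps,ij}=\eps H_{0,ij}+O(\eps^2)$, an error $C_1\eta$ on $\Gamma$ means a relative error of order $\eta/\eps$ on the curvature. The scale trade-off of Corollary~\ref{cor:scale-density} therefore reappears, which explains why the bound is stated only conditionally.

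\textbf{Step 3: show that first-order RIP does not imply lifted RIP.} One counterexample suffices, and this is the step I expect to need the most care. The construction should keep the first-order design well conditioned, ideally satisfying RIP, while $\Phi(A)$ fails restricted conditioning on some small support. The first thing I would try is a masking pattern in which coordinates $i$ and $j$ are never simultaneously active. Then the column $a_ia_j$ is identically zero and $\Phi(A)$ is singular on the support $\{(i,j)\}$, while the first-order columns are unaffected. A natural instance is $A=I$ (coordinate patching), which is perfectly isometric but gives all pair columns zero.

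A second obstruction arises with binary $\{0,1\}$ masks: there $a_i^2=a_i$, so pair columns become products that correlate with the main-effect columns. I would exhibit this as a near-collinearity, with $\mathrm{corr}(a_i,a_ia_j)\approx\sqrt{\rho}$ under i.i.d.\ Bernoulli($\rho$) masks, as a further example of why the lifted design must be checked separately. I would keep this example explicit and small, with $n=2$ or $3$.
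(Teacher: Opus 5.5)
\textbf{Verdict: correct.} Your Steps 1--2 follow the paper's argument, and your Step 3 counterexample is a genuinely different and sharper one than the paper's.

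\textbf{The reduction.} Appendix~\ref{app:proof-lifted} writes the main-plus-pair response as $\phi(a)^\top\theta_\eps+w_\eps(a)$ and stacks it into $\widetilde y=\Phi(A)\theta_\eps+w_\eps$. It then invokes the stable-recovery bound of Appendix~\ref{app:proof-measurement} with $A$ replaced by $\Phi(A)$, exactly as you do. Your explicit list of what $w^{\mathrm{lift}}$ absorbs is more detailed than the paper's, but not different in kind.

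\textbf{The non-implication.} The paper supports this claim with the aliasing example in Appendix~\ref{app:lifted-conditioning}: $a_i=a_j\in\{0,1\}$ on every mask, so the columns $a_i$, $a_j$ and $a_ia_j$ coincide. It adds the remark that dense Rademacher masks make main and pair columns orthogonal only in population. That example, however, already makes columns $i$ and $j$ of $A$ identical. The first-order design then fails RIP too, so the example does not separate the two properties.

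Your construction avoids this. Take $A=I$, or any design in which $i$ and $j$ are never active together. Then $A$ stays perfectly isometric while the pair column is identically zero, so $\Phi(A)$ fails restricted conditioning already at order one. This actually proves the stated non-implication, and it matches Proposition~\ref{prop:blindness}: singleton designs cannot see cross terms.

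Your Bernoulli calculation also checks out. The uncentered cosine between $a_i$ and $a_ia_j$ is exactly $\sqrt\rho$, and the centered correlation is $\sqrt{\rho/(1+\rho)}$. This quantifies the paper's qualitative point about sparse masks.

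\textbf{One correction to a side remark.} Translating $C_1\eta$ into curvature error gives an \emph{absolute} error of order $\eta/\eps$ in $H_0$, plus $O(\eps)$ bias; it is not a relative error. Also, this scale trade-off is not what makes the proposition ``conditional.'' The condition is the restricted conditioning of $\Phi(A)$.
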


With ideal dense independent Rademacher masks, whose entries are independent random $+1$ or $-1$ values, the main-effect columns $a_i$ and pair columns $a_i a_j$ are orthogonal in the population. Real designs are finite and often sparse or constrained. Pair columns can then duplicate or nearly duplicate one another. We must therefore inspect the conditioning of the lifted design and test the recovered map on masks that were not used for fitting.

Appendix~\ref{app:lifted-conditioning} gives a two-component example of this aliasing problem.

Held-out prediction is therefore part of identification, not merely a reporting convention. On the training masks, a true pair and a false pair can produce the same column and appear equally plausible. A separate or deliberately de-aliased mask design tests whether the recovered interaction transfers.

\noindent\textbullet\quad\textbf{How should missing interactions be measured?}\par
\noindent White-box access solves the first-order part cheaply, but one backward pass still does not return the full pair map. We need measurements of curvature as well. A Hessian-vector product (HVP) measures how the gradient changes along a chosen direction without constructing the full Hessian, so designed HVP directions can serve as aggregate measurements of local interactions.

\begin{corollary}[White-box second-order tomography]\label{cor:hvp}
A backward pass gives $\nabla f$, while designed HVP queries give linear measurements of the local curvature map $H_0$. If the response is quadratic on the intervention path, the known relation $\Gamma_\eps=\eps H_0$ converts local curvature to the normalized finite pair map. Otherwise HVPs identify only a local quantity; finite-scale prediction needs path integration or finite calibration. Lifted forward measurements estimate the finite-scale map directly. Integrated Hessians and HVP correction provide precedents~\cite{janizek2020integrated,zhang2026lies}.
\end{corollary}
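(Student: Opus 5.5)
The plan is to split Corollary~\ref{cor:hvp} into its three claims and treat each as a short consequence of earlier results, chiefly Proposition~\ref{prop:blindness} and the measurement-reduction logic of Lemma~\ref{lem:measurement-reduction}.

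\textbf{The HVP measurement relation.} For a fixed context, write the component-restricted Hessian as $H_c=D_c^\top\nabla^2 f_c(h_c)D_c$ and set $H_0=\E_c H_c$. A Hessian-vector product along a designed direction $v$ returns $\nabla^2 f_c(h_c)D_cv$. Projecting onto $D_c$ and averaging over contexts gives $H_0v$; projecting further onto a readout direction $u$ gives the scalar $u^\top H_0 v=\langle uv^\top,H_0\rangle$. Stacking several pairs $(u_k,v_k)$ therefore produces $\widetilde y=\mathcal A(H_0)+w$, with a linear sensing operator $\mathcal A$ acting on $\mathrm{vec}\,H_0$. Here $w$ contains only sampling error, since the HVP is exact for the local quantity. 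This puts curvature recovery in (MT). It also explains why a single backward pass, which returns only $\nabla f$ and hence $x$, cannot return the pair map, consistent with Proposition~\ref{prop:blindness}.

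\textbf{The quadratic case.} Suppose $f_c(h_c+tD_ca)$ is quadratic in $t\in[0,\eps]$. Taylor's theorem is then exact:
\[
f_c(h_c+\eps D_ca)-f_c(h_c)=\eps\nabla f_c^\top D_ca+\tfrac{\eps^2}{2}a^\top H_ca .
\]
Dividing by $\eps$ and averaging over contexts gives $a^\top x+\tfrac{\eps}{2}a^\top H_0 a$. Expanding the quadratic form separates a diagonal part from an off-diagonal part, $\sum_{i<j}a_ia_j\,\eps H_{0,ij}$. Comparing with the expansion in Proposition~\ref{prop:blindness}, the pair coefficient is $\Gamma_{\eps,ij}=\eps H_{0,ij}$ exactly, because the $O(\eps^2)$ remainder vanishes. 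The diagonal terms are absorbed into main or self effects according to that proposition's convention.

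\textbf{The non-quadratic case.} Here we use the integral form of the remainder,
\[
\tfrac{1}{\eps}\int_0^\eps (\eps-t)\,a^\top D_c^\top\nabla^2 f_c(h_c+tD_ca)D_ca\,dt .
\]
This shows that the finite pair map is a path-weighted average of Hessians along the intervention. It differs from $\eps H_0$ by a term controlled by how much the Hessian varies along the path. Two facts follow. First, HVPs taken at the baseline identify only $H_0$. Second, recovering $\Gamma_\eps$ requires one of three things: HVPs at points along the path (path integration), a correction fitted from finite probes as in Proposition~\ref{prop:caldim}, or direct lifted measurements. The last option is covered by Proposition~\ref{prop:lifted}, whose lifted columns $a_ia_j$ target $\Gamma_\eps$ itself.

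\textbf{Main obstacle.} The hardest step is stating precisely what identifiability means in the non-quadratic case. To show that baseline HVPs cannot determine $\Gamma_\eps$, I would construct two functions. They share the gradient and Hessian at $h_c$ but differ by a cubic term in $d_id_j$ along the path. Both yield identical HVP data at the baseline, yet they have different finite pair maps. The remaining claims are bookkeeping.
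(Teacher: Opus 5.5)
Your proposal is correct and follows essentially the same route as the paper's proof. That route is: a backward pass supplies only $\nabla f$; HVPs give linear measurements of $H_0$; an exact Taylor expansion yields $\Gamma_\eps=\eps H_0$ when the response is quadratic; and otherwise baseline HVPs identify only local curvature. Your integral-remainder formula and your perturbation construction make rigorous what the paper's Step~4 only asserts, namely that two responses can agree to second order at the baseline yet give different finite responses.
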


Appendix~\ref{app:hvp} gives the proof and explains when a local curvature map predicts a finite intervention.

\section{What the formulation cannot decide}\label{sec:limits}

The inverse formulation requires three external specifications that cannot be deduced from model responses alone:

\begin{enumerate}[leftmargin=*,label=\arabic*.]
\item The coordinate basis is an experimental hypothesis. Interventions can target layers, attention heads, MLP blocks, or Sparse Autoencoder (SAE) features. A mechanism that appears sparse in one basis can look distributed in another, and a coordinate that tracks the output may not be causally involved. Because superposition and dictionary learning make coordinate choice non-unique, selecting a basis fundamentally defines the units of the mechanism rather than merely describing them~\cite{arora2014dictionary,elhage2022superposition,leask2025canonical}.

\item Behavioral control does not identify internal mechanisms. An activation direction can steer output behavior without representing the underlying concept---for example, an entropy-aligned direction may serve as an effective steering knob without representing the model's true belief state or causal bottleneck~\cite{agarwal2025scaling}. Control success alone cannot identify an internal circuit; establishing representational faithfulness requires an independent reference, such as an analytic posterior, a compiled Tracr program, or causal interchange tests~\cite{agarwal2025geometry,geiger2024das,geiger2025causal}.

\item Effect maps depend on the context distribution. Because component effects are defined as an expectation over contexts in Equation~\eqref{eq:firstorder}, averaging across diverse prompts can blend distinct mechanisms and accurately describe none of them. An observer's validity is therefore bounded by the specific prompt distribution over which its measurements are collected and evaluated.
\end{enumerate}

Our experiments evaluate these choices through a step-down validation ladder that systematically relaxes ground-truth control. We begin with synthetic HMMs, where exact Bayesian posteriors provide an analytic target for closed-loop control~\cite{agarwal2025geometry}. A planted synthetic harness then fixes both the coordinate basis and active components to validate interaction recovery algorithms. Next, compiled Tracr models keep the underlying algorithm constant while we swap coordinate bases. Finally, we transition to pretrained models: first GPT-2 IOI, which provides documented circuit groups without exact ground truth, and ultimately Qwen-2.5-7B, where only behavioral responses are available. Each step relinquishes one external reference only after the corresponding measurement step has been verified.

\section{Control-facing validation}\label{sec:control}

We begin the empirical evaluation in a fully controlled feedback loop. This check comes first because making internal measurements cheaper is useful only if observer quality affects the downstream action. We run two control checks. First, we hold the transformer, controller, and actuator fixed while varying only the observer---the internal-state estimate that guides the intervention. This isolates whether estimation error produces closed-loop control error. Second, we use an entangled observer to ask whether target improvement can hide movement of an unrelated nuisance state. An observer that mixes target and nuisance information may improve the stated target while changing another representation that the intervention should leave unchanged.\footnote{In a real model, a nuisance state is an internal representation or capability that should remain unchanged under the intervention. Examples include syntactic fluency, reasoning ability, or unrelated factual knowledge when steering a specific concept. In our synthetic benchmark, the nuisance state is the exact posterior log-odds of a second, independent Markov process.}

A control-oriented description separates four objects that steering experiments often merge. The transformer has an internal activation state $x_t$; in classical control terminology, it is the \emph{plant}.\footnote{More precisely, the controlled system includes the transformer state at the edit location and the remaining computation from that state to the readout. We use \emph{transformer} in the text and reserve \emph{plant} for this classical control-theory meaning.} The quantity we want to regulate is $z_t=\phi(x_t)$, but the controller does not observe it directly. It receives measurements $y_t$, uses an observer $O$ to form the estimate $\widehat z_t$, and then applies a control rule $K$:
\begin{align}
    z_t &= \phi(x_t), & y_t &= h(x_t,c_t)+\epsilon_t,\\
    \zhat_t &= O(y_{0:t},c_{0:t}), & u_t &= K(\zhat_t,r_t).
\end{align}
Here $c_t$ denotes the context, $\epsilon_t$ measurement noise, $r_t$ the desired reference, and $u_t$ the intervention. In an activation-steering example, $x_t$ is the residual-stream state, $O$ is a probe or another state estimate, $K$ converts the estimated error into an edit size, and the actuator is the direction along which the activation is changed. The intervention could instead be a logit bias, retrieval insertion, prompt rewrite, or decoding change. Even a well-behaved controller $K$ can fail if the observer $O$ estimates the wrong quantity. We therefore validate the estimate in four ways: agreement with an oracle when one exists, sensitivity under a fixed controller, movement of nuisance variables, and prediction on held-out interventions in a real model.

\begin{figure}[t]
\centering
\includegraphics[width=0.82\linewidth]{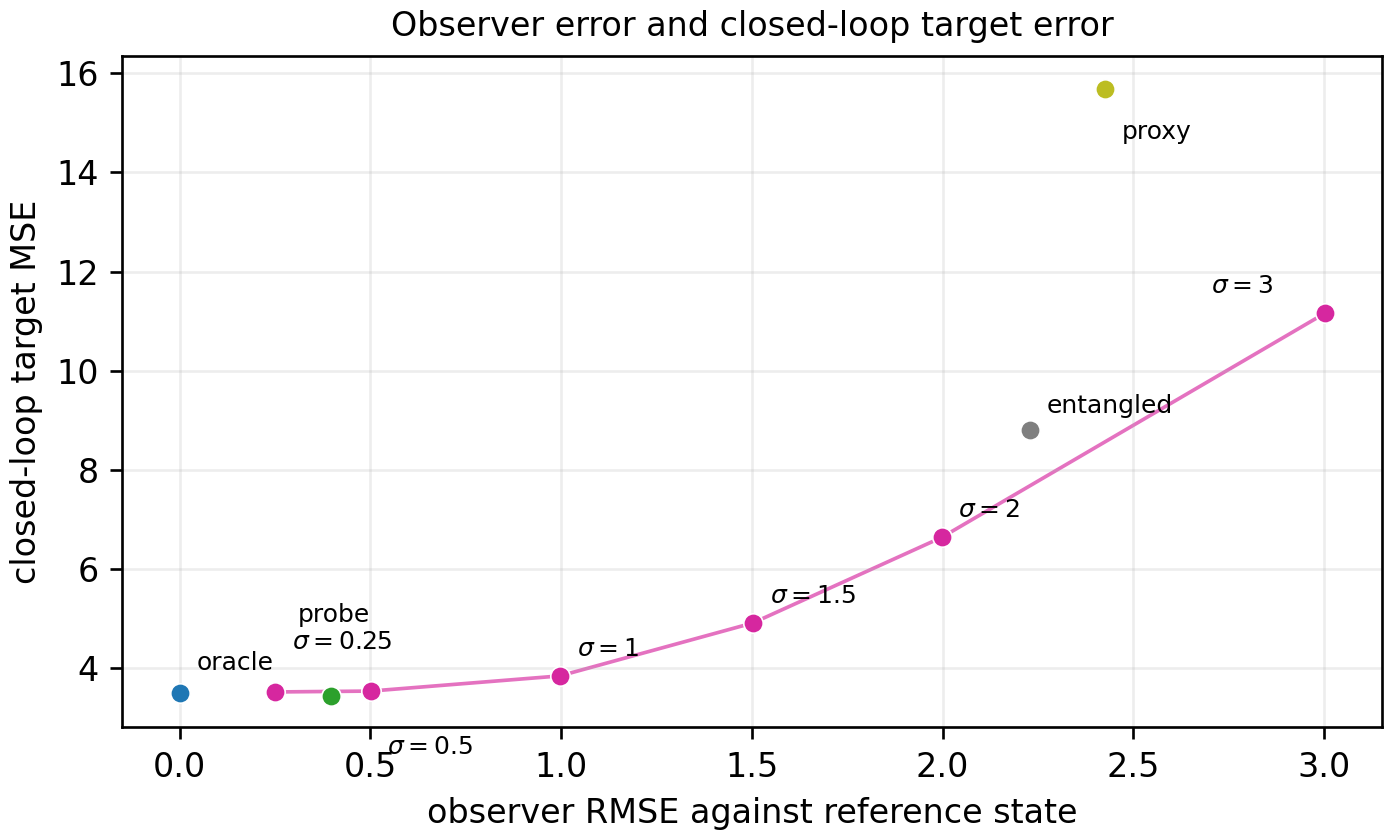}
\caption{With the controller and actuator fixed, closed-loop target error rises with observer error across ten observer variants. Oracle and probe lie low-left; coarse, noisy, proxy, and entangled estimates move up and right.}
\label{fig:observer}
\end{figure}

\subsection{Observer error reaches the loop; target success can hide nuisance movement}\label{sec:result-control}
We need a setting in which the quantity an observer should recover is known exactly. A hidden Markov model (HMM) supplies that reference: after every token, we can compute the exact filtered posterior over its changing hidden state. We can therefore measure observer error before placing the estimate in a feedback loop. This is the reason for starting with an HMM rather than a natural-language task, where the model's true internal belief is not available.

Building on analytic-posterior HMM wind tunnels~\cite{agarwal2025geometry}, we train a small transformer on tokens generated by two independent HMMs. One hidden process defines the target belief $z_1$; the other defines a nuisance belief $z_2$. We use their exact posterior log-odds as the two reference states. The model reaches the Bayes cross-entropy floor, and a linear probe recovers $z_1$ with $R^2\approx0.99$.

The first check asks whether observer quality matters once the estimate enters a feedback loop. We create ten observers with different amounts and kinds of error, including a last-observation proxy that uses only the current emission instead of the filtered history. We then hold the transformer, actuator, controller, and target fixed. Across these observers, closed-loop target error rises with observer RMSE (Spearman $r_s=0.95$). This is a necessary sanity check for the rest of the paper: if observer error did not affect control in a known-state setting, better measurement would have no demonstrated downstream value.

The second check asks a different question. Can an observer improve the target while moving an unrelated state? We deliberately mix the target and nuisance estimates as $z_1+0.5z_2$ and use the corresponding mixed actuation direction $d\propto d_1+0.5d_2$. This condition reduces target MSE from $25.02$ without control to $4.81$, while the oracle target-only condition reaches $3.56$. Yet nuisance movement rises from $0.037$ to $0.079$. Because this stress test changes both the observer and the actuator, it does not isolate observer error. It establishes the narrower point for which it was designed: target improvement alone does not show that the control action used the intended internal direction.

The two criteria mostly agree here, and where they do not, the reason is instructive. The linear probe sometimes produces slightly lower closed-loop error than the analytic posterior oracle, even though the oracle is the more faithful estimate of the external HMM posterior. The controller edits the state represented by the transformer, and the probe can align more closely with that editable state than the external posterior does. This is a measured separation between reconstruction and control utility. We therefore report both: the analytic posterior tests representational faithfulness, while closed-loop performance tests how well an observer matches the state used by this model--actuator pair.

\section{Operational extensions}\label{sec:extensions}

Having established that observer quality affects downstream control, we now ask how cheaply an adequate observer can be measured under forward-only and white-box access.

\subsection{Aggregate measurements replace exhaustive probes when the map is compressible}\label{sec:aggregate}

Without gradient access, coordinate patching requires a separate forward intervention for every coordinate. Mechanistic tomography suggests another option: change several coordinates together and treat the response as one aggregate measurement. Each such intervention supplies one row of $A$. If only a few coordinates have large effects, several aggregate rows may recover the map with fewer interventions than probing every coordinate separately.

We test this possibility in the same HMM model used for the control experiment. We define 32 coordinates by crossing transformer layer with token position. This dimension is small enough that we can also patch all 32 coordinates individually. Their finite effects provide a reference against which to evaluate the aggregate recovery.

The reference map is not exactly sparse, but it is strongly compressible: its four largest coordinates contain $98.4\%$ of its squared $\ell_2$ norm. We therefore use orthogonal matching pursuit (OMP), a sparse-recovery method that selects a small set of coordinates and fits their effects from the aggregate responses. A separate validation split chooses how many coordinates OMP should retain. We also fit ridge regression as a dense baseline.

After 12 aggregate measurements, validation selects a four-coordinate OMP model. It reaches Pearson $r=0.989$ against the coordinate-patching reference and $R^2=0.935$ on held-out aggregate masks. Ridge regression requires 32 measurements, the same count as exhaustive coordinate patching, to reach $r=0.962$ and held-out $R^2=0.933$.

This is a positive but conditional result. Aggregate measurements save interventions here because the finite-effect map is compressible and the aggregate response is approximately additive. They would not provide the same advantage for a dense map or one dominated by interactions. Bair et al. demonstrate the same sparse-recovery principle at pretrained-model scale using attention-head knockouts~\cite{bair2026compressed}. Here the experiment establishes the forward-only baseline before we consider the cheaper access provided by gradients.

\begin{figure}[t]
\centering
\includegraphics[width=0.82\linewidth]{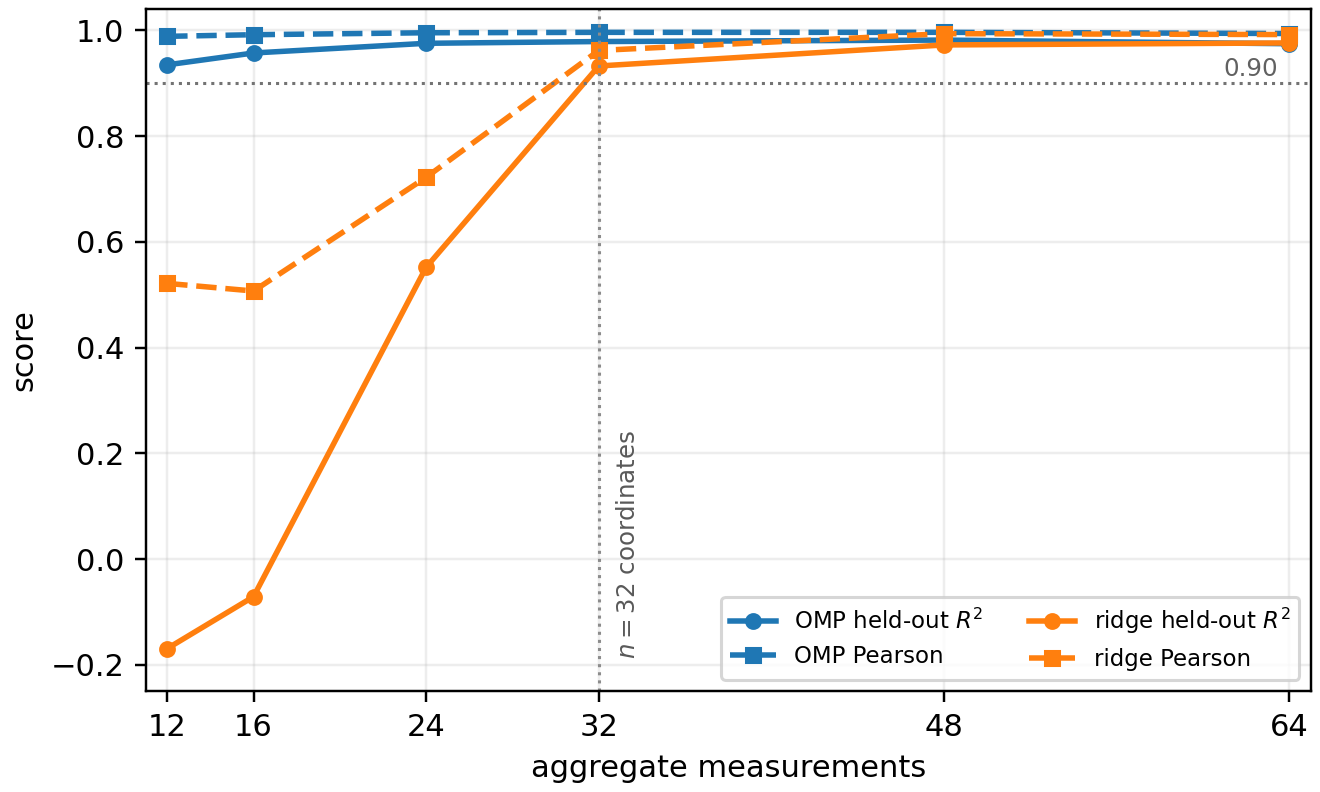}
\caption{Aggregate recovery of the 32-coordinate finite-effect map. OMP scores Pearson $r=0.989$ and held-out $R^2=0.935$ with 12 aggregate measurements. Ridge reaches comparable held-out prediction with 32 measurements, the exhaustive coordinate count.}
\label{fig:forward}
\end{figure}

\subsection{A few finite probes calibrate attribution patching at intervention scale}\label{sec:step0}

With gradient access, attribution patching returns the complete local first-order map in one backward pass. This is cheaper than recovering the same map from many forward measurements. The limitation is that a gradient describes an infinitesimal change, while an actual intervention has finite size. Curvature and saturation can make the finite response smaller than, or otherwise different from, the gradient prediction.

We therefore ask the cheapest question first: is the mismatch mostly a single scale factor, or is the local map missing structure? We test two intervention scales, $\eps\in\{5,8\}$, using the same trained HMM model and component directions as in Sec.~\ref{sec:aggregate}. Raw AtP already predicts the smaller intervention well. At $\eps=8$, saturation makes the finite response smaller than the gradient extrapolation, so raw AtP systematically overpredicts it. We fit one scalar gain from a set of finite calibration probes and apply it to the entire gradient map. This correction changes the scale of the map without re-estimating its coordinates.

A gain can look successful simply because it was fitted on a favorable set of sequences or masks. We therefore cross three independently generated evaluation batches with three independently generated mask-and-split designs, giving nine evaluation cells. Each cell contains 128 masks. We use 72 to fit the scalar gain and the sparse OMP comparison, 24 to select the OMP model, and 32 only for final evaluation. The gain may depend on intervention scale; we do not treat it as a universal property of the model.

\begin{table}[t]
\centering
\caption{Finite-scale calibration on one trained model whose intervention directions are estimated once. We cross three independently generated 256-sequence evaluation batches with three mask-and-split designs. Entries are means [min, max] across the nine cells.}
\label{tab:step0}
\scriptsize
\setlength{\tabcolsep}{3pt}
\begin{tabular}{lrrrr}
\toprule
Scale & raw AtP $R^2$ & calibrated AtP $R^2$ & OMP $R^2$ & fitted gain $\widehat g$ \\
\midrule
$\eps=5$ & 0.943 [0.920, 0.967] & 0.963 [0.933, 0.983] & 0.965 [0.952, 0.976] & 0.874 [0.799, 0.915] \\
$\eps=8$ & 0.818 [0.751, 0.903] & 0.960 [0.925, 0.978] & 0.969 [0.960, 0.977] & 0.742 [0.677, 0.780] \\
\bottomrule
\end{tabular}
\end{table}

At $\eps=8$, calibration raises mean held-out $R^2$ from $0.818$ to $0.960$, close to OMP's $0.969$. A gain fitted on the other two evaluation batches gives a leave-one-batch-out mean of $0.959$ across the nine target/design cells. The correction therefore transfers across sequence batches rather than merely fitting one set of examples. At $\eps=5$, where raw AtP is already strong, calibration raises mean held-out $R^2$ from $0.943$ to $0.963$, compared with $0.965$ for OMP.

We next ask how many finite probes the correction needs. From the 96 masks not reserved for final evaluation, we repeatedly draw calibration sets of 4, 8, and 16 probes; the 96-probe condition uses the full pool. Eight probes already give mean held-out $R^2=0.951$ at $\eps=5$ and $0.948$ at $\eps=8$. Sixteen probes raise these values to $0.957$ and $0.955$. This is why calibration is the first extension to try when gradients are available: a handful of finite measurements can correct most of the scale error without recovering the component map again.

The result is positive, but it is not a blanket endorsement of scalar calibration. A scalar correction is the calibration-dimension-one hypothesis. Even after all 96 probes, some evaluation-batch/design cells retain a gap to OMP. The data therefore support the scalar gain as a useful approximation, not as an exact description of the finite map. Held-out error supplies the stopping rule: stop when the corrected map predicts the intended interventions well enough; otherwise move to a richer observer.

\begin{figure}[t]
\centering
\begin{subfigure}{0.49\linewidth}
\includegraphics[width=\linewidth]{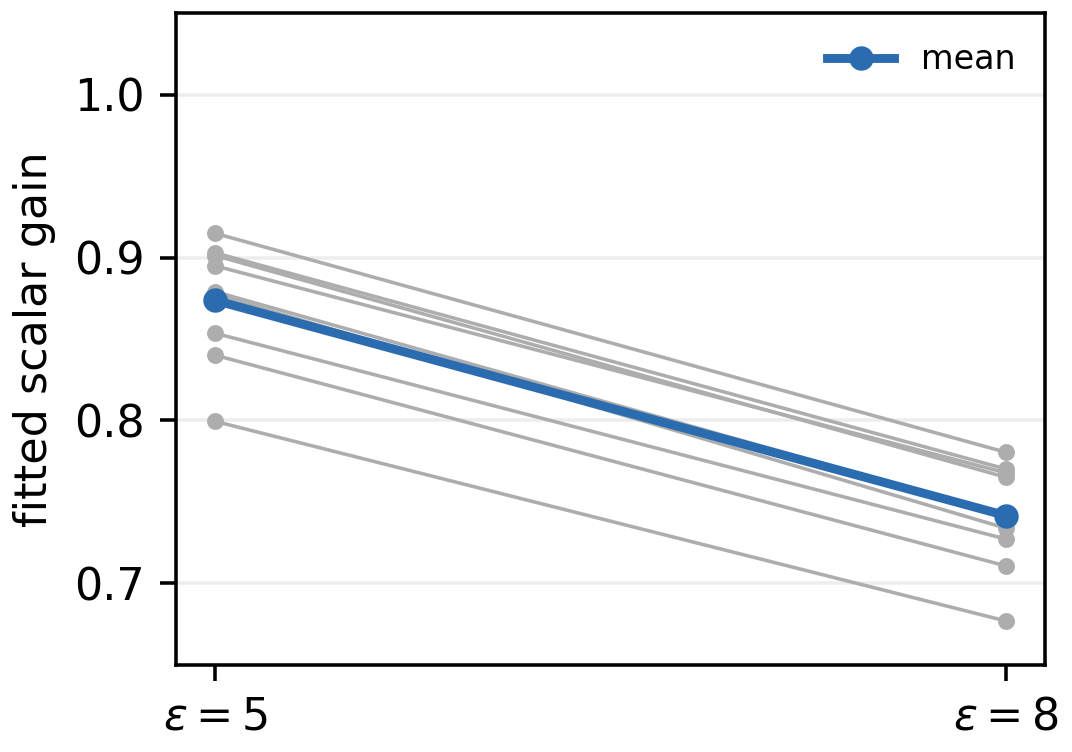}
\caption{Gain shift with scale}
\end{subfigure}\hfill
\begin{subfigure}{0.49\linewidth}
\includegraphics[width=\linewidth]{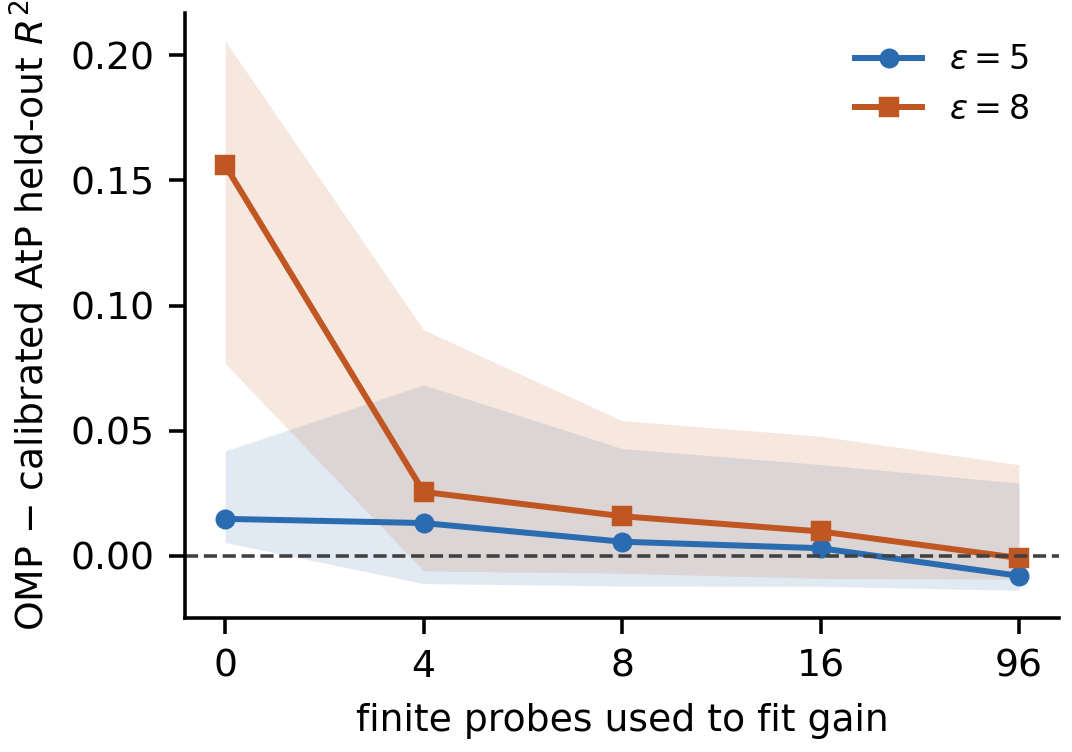}
\caption{Small-budget calibration}
\end{subfigure}
\caption{Finite-scale calibration across the fixed-direction $3\times3$ design. Left: fitted gain at each scale for the nine evaluation-batch/design cells. Right: the median held-out $R^2$ gap between OMP and calibrated AtP as the calibration budget grows; bands show the 10th--90th percentiles over cells and subset draws.}
\label{fig:step0}
\end{figure}

\FloatBarrier
\subsection{Interaction measurements recover effects that first-order maps miss}\label{sec:interactions}

Scalar calibration can change the size of a first-order effect, but it cannot create an effect that appears only when two components change together. If the response contains a term such as $a_i a_j$, no gain applied to an additive map can represent it. This matters for transformers because backup, suppression, redundancy, and self-repair make the effect of one component depend on whether another is active. Before studying these effects in a pretrained model, we use a planted system in which the correct terms are known. This lets us distinguish three failures that can otherwise look alike: choosing a family that cannot represent the response, collecting too few measurements, and selecting a correlated but noncausal feature.

The harness contains 64 candidate components. We plant four nonzero terms: two ordinary main effects, one pure interaction with no first-order signal, and one pair that models redundancy or self-repair. We can also add an off-path confound. Designed-mask methods such as SPEX and ProxySPEX use the same general principle to recover sparse feature and attention-head interactions~\cite{kang2025spex,butler2025proxyspex}.

To represent pair effects, we expand each measurement row. In addition to the original coefficients $a_i$, the row now contains every product $a_i a_j$. The unknown map therefore has one coefficient for each component and one for each unordered pair:
\begin{equation}
    N = n + \binom{n}{2} = 64 + 2016 = 2080.
\end{equation}
We call this the \emph{lifted} basis because pair products become ordinary columns in a larger linear model. Only four of the 2,080 possible terms are nonzero. Measuring every pair separately would require 2,016 pair probes. Aggregate masks may recover the same sparse map with far fewer measurements.

We ask four questions in turn: whether the family can represent the response, how many measurements recovery needs, whether white-box access offers a cheaper route, and how noise and aliases affect the answer.

\paragraph{Can the family represent the response?}

We begin with a generous budget of 256 low-noise aggregate measurements. This removes measurement scarcity as a likely explanation for failure. At $\eps=8$, lifted OMP recovers the planted pair structure and gives nearly perfect prediction on held-out masks. Raw AtP, scalar- and multi-gain AtP, finite singleton patching, and first-order OMP all omit pair features. Every one remains below $0.5$ held-out $R^2$. The gap is representational rather than a fitting failure: the additive families do not contain the planted cross terms, so additional calibration measurements cannot repair them.

\begin{figure}[t]
\centering
\includegraphics[width=0.92\linewidth]{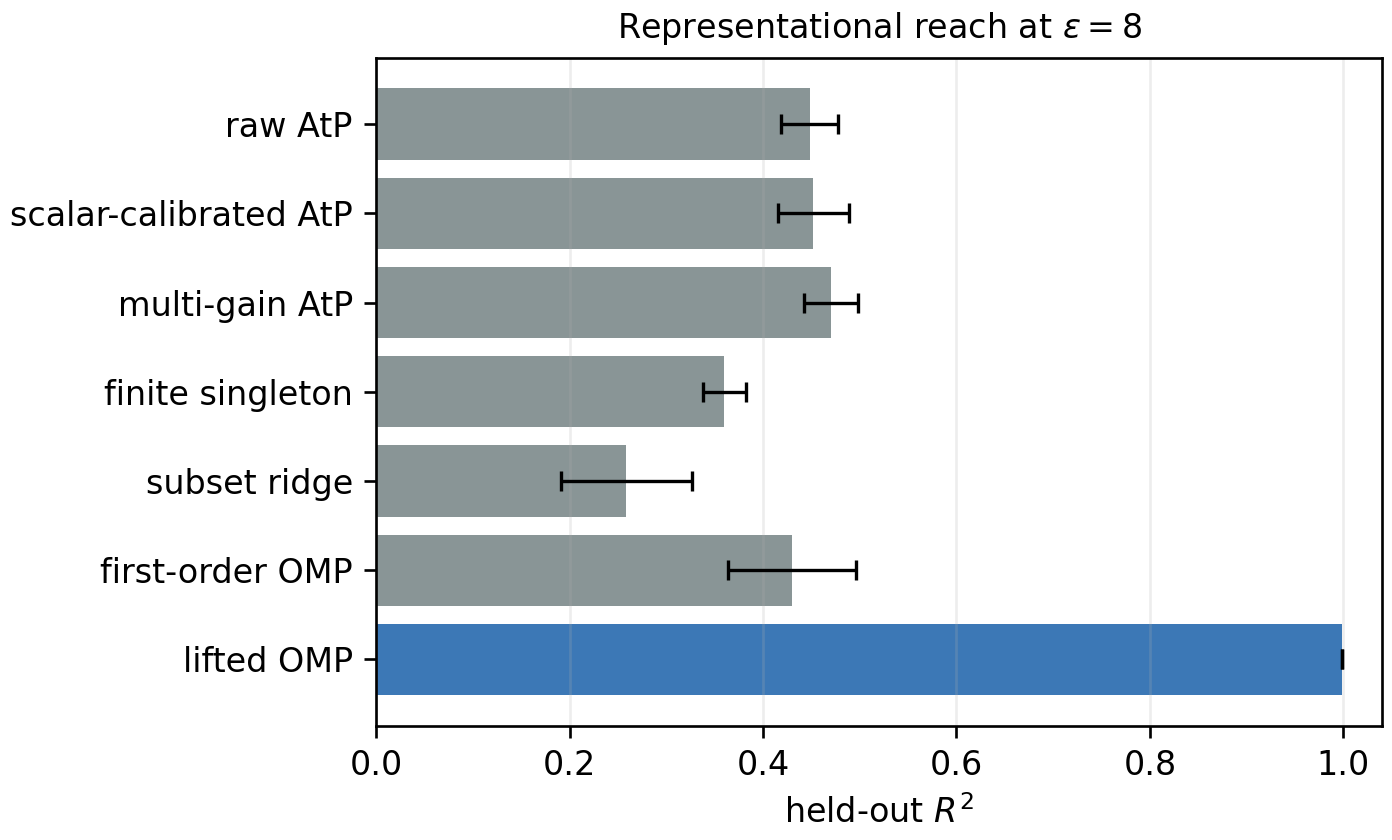}
\caption{Interaction-aware recovery at $\eps=8$ with a generous fixed budget. Lifted OMP recovers the planted pair-bearing response, while methods without pair features remain below $0.5$ held-out $R^2$.}
\label{fig:claim3bar}
\end{figure}

\paragraph{How many forward measurements are needed?}

Once we know that the lifted family can represent the response, we reduce the measurement budget. We require both held-out $R^2\ge0.95$ for finite-response prediction and pair recall $\ge0.99$ for the planted support. Lifted OMP first meets both conditions with 64 aggregate measurements at $\eps=5$ and 72 at $\eps=8$. Exhaustively measuring every pair would require 2,016 probes.

The usual sparse-recovery heuristic gives $4\log(N/4)\approx25$ measurements for four nonzero terms in 2,080 dimensions. We report this only for orientation. It omits constants, mask normalization, finite-sample variation, and the conditioning of the lifted design. Low-budget trials are also variable. We therefore report the first budget at which both prediction and support-recovery thresholds hold, rather than selecting the best isolated run.

\begin{figure*}[t]
\centering
\includegraphics[width=0.92\textwidth]{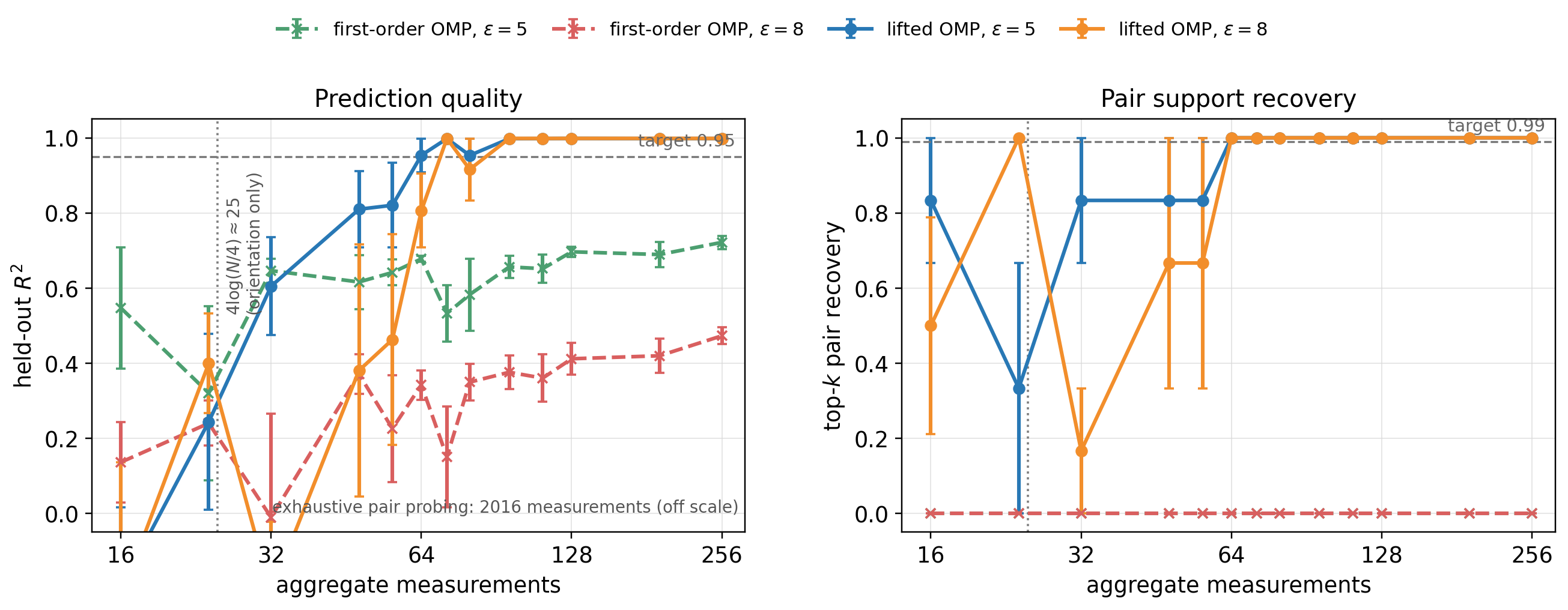}
\caption{Measurement budget for lifted recovery. With four planted nonzero terms among 2,080 main-and-pair features, lifted OMP reaches the prediction and pair-recall thresholds with 64--72 aggregate measurements, compared with 2,016 exhaustive pair probes. First-order OMP has no pair features.}
\label{fig:budget}
\end{figure*}

\paragraph{White-box alternative: AtP plus HVPs.}

Lifted forward measurements are useful when the endpoint is available only through model evaluations, as with sampling, decoding, retrieval, tool use, or a behavioral score. With white-box access, we can measure the local pair structure more directly. A gradient asks how the output initially changes along each component direction. An HVP asks how those gradient effects change when we move in a chosen direction. Attribution patching supplies the main-effect map in one backward pass, while designed HVP queries provide aggregate measurements of pair curvature.

The planted response is exactly quadratic. For this system, multiplying the local HVP pair coefficient by $\eps$ gives the normalized finite-scale pair coefficient. We know this conversion because we constructed the response function. We do not assume that the same conversion holds along a finite intervention path in a transformer.

The combined AtP--HVP observer reaches held-out $R^2\ge0.95$ and pair recall $\ge0.99$ with 12 HVP queries at $\eps=5$ and 24 at $\eps=8$. These are empirical thresholds for this harness, not a scaling law. They count the HVP queries; the full method also requires the AtP backward pass that supplies the main effects.

The pair-only HVP ablation finds the correct interacting pairs but reaches only $R^2=0.36$ at $\eps=5$ and $0.60$ at $\eps=8$. Recovering the pair support is therefore not enough to predict the full response. The observer must combine the AtP main effects with the HVP pair effects. Replacing the gradient main effects with finite singleton effects also performs worse. In this harness, singleton interventions saturate and underpredict larger aggregate changes, while the gradient remains the better main-effect approximation once the missing pairs are included.

\begin{figure*}[t]
\centering
\begin{subfigure}{0.49\textwidth}
\includegraphics[width=\linewidth]{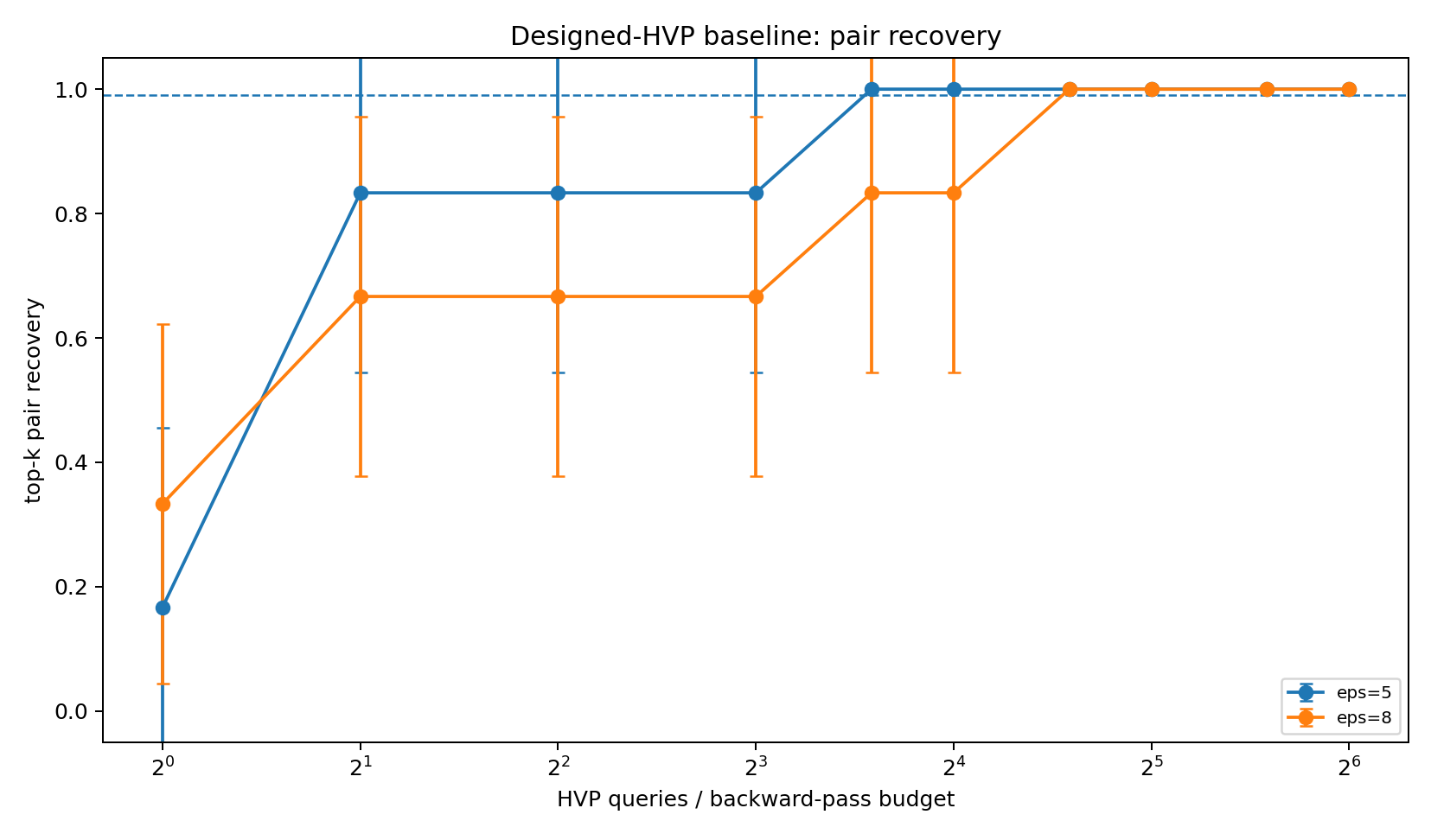}
\caption{pair recovery}
\end{subfigure}\hfill
\begin{subfigure}{0.49\textwidth}
\includegraphics[width=\linewidth]{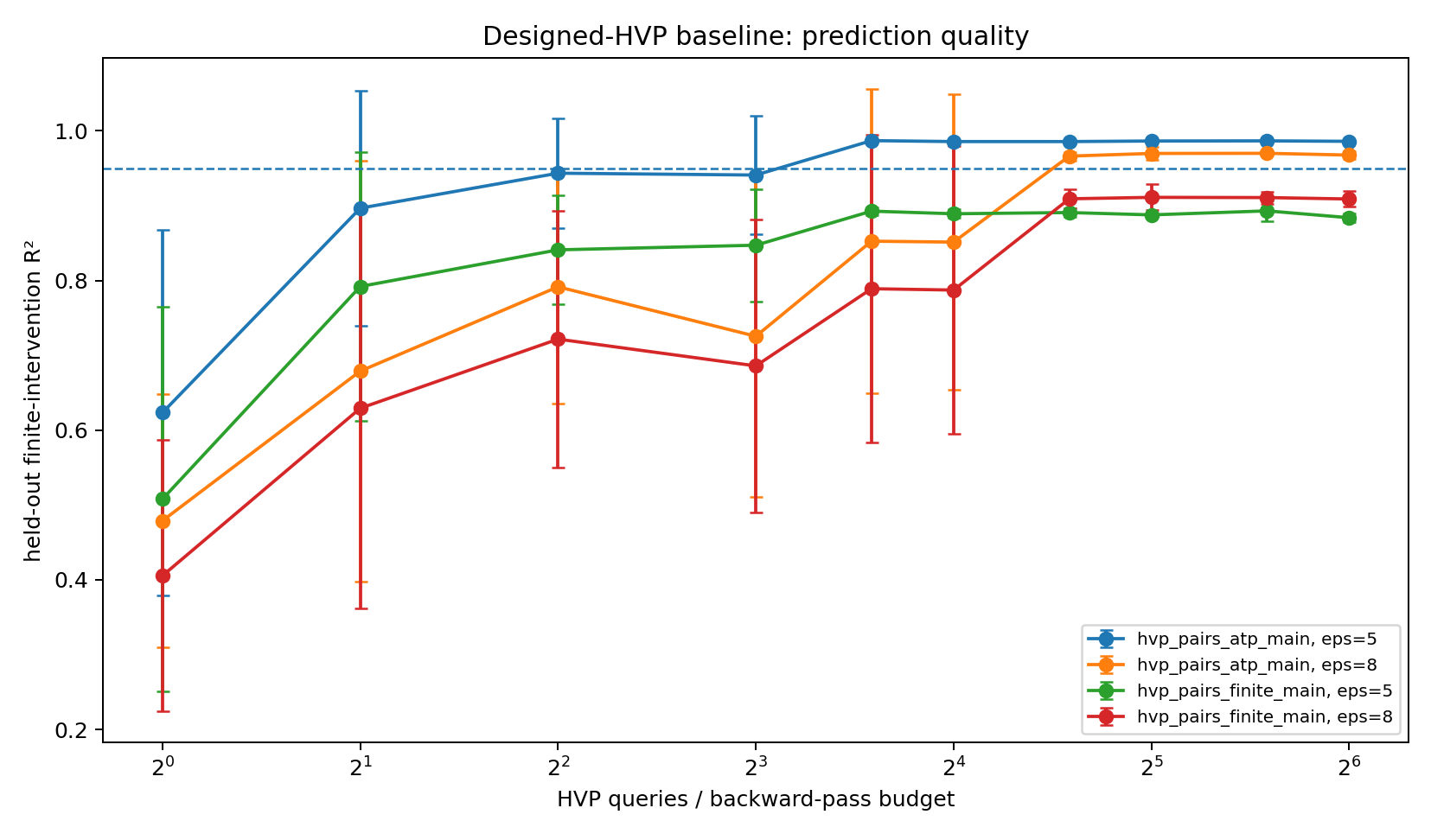}
\caption{held-out prediction}
\end{subfigure}
\caption{White-box interaction recovery. Designed HVPs recover the local pair map, and the quadratic harness supplies the known conversion to finite-scale pair coefficients. Accurate prediction also requires the AtP main-effect map.}
\label{fig:hvp}
\end{figure*}

\paragraph{Support recovery and response prediction are different.}

We next use 96 forward measurements, above the clean recovery threshold, and add zero-mean Gaussian noise to each normalized response. This separates two possible goals: locating the interacting pairs and estimating their coefficients accurately enough to predict a new intervention.

At $\eps=5$, both prediction and pair recall remain high through noise standard deviation $0.01$, while pair recall alone persists through $0.05$. At $\eps=8$, both remain high through $0.05$, and recall alone persists through $0.2$. The interacting pair can therefore remain identifiable after coefficient error has made the observer unsuitable for finite-response prediction.

\begin{figure*}[t]
\centering
\includegraphics[width=0.92\textwidth]{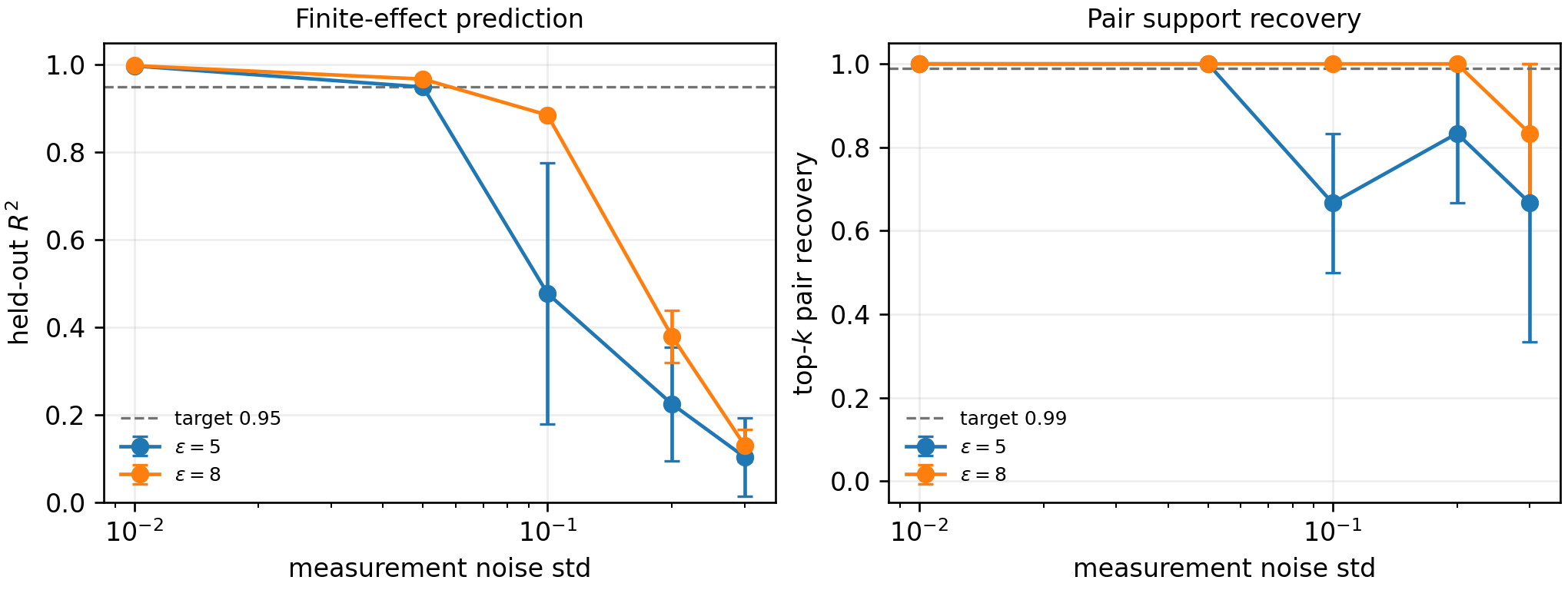}
\caption{Noise robustness with 96 aggregate measurements. Pair support can remain recoverable after held-out finite-response prediction degrades, separating support identification from coefficient accuracy.}
\label{fig:noise}
\end{figure*}

\paragraph{Why held-out masks matter.}

Finally, we give one off-path coordinate the same pattern as a causal coordinate on the training masks. Either feature can then explain the observed responses, even though only one belongs to the planted mechanism. This is a measurement alias.

Sparse support selection followed by independent held-out masks suppresses the false positive. The test shows why held-out interventions participate in identification rather than serving only as a final score. It does not exhaust the problem: a stronger future stress test should alias a false pair with a true pair inside the lifted design.

The planted harness shows when interaction-aware measurements are necessary and how forward-only and white-box access lead to different recovery routes.

\subsection{The basis determines the required measurement family}\label{sec:basis}

The planted experiment in Sec.~\ref{sec:interactions} gives us the correct coordinates in advance. Real interpretability work does not. The same computation can look additive in a basis that contains a completed detector and interactional in a basis that contains only the detector's inputs. We therefore need a system in which we can change the basis while keeping the underlying computation known.

A simple conjunction illustrates the issue. Suppose the model must report whether the previous token is A and the current token is B. If the basis contains one completed ``AB detected'' coordinate, an additive readout can use that coordinate directly. If the basis contains only separate coordinates for ``previous token is A'' and ``current token is B,'' neither coordinate is sufficient by itself. Their product is required. The computation has not changed; only the coordinates used to describe it have changed.

Tracr provides this intermediate setting~\cite{lindner2023tracr}. It compiles programs written in the restricted-access sequence processing (RASP) language into executable transformers. The compiler also assigns names to residual-stream coordinates associated with program variables. We can therefore inspect the same known computation through different sets of coordinates.

We compile three programs over the symbols A, B, and C. The first reports a property of the current token and serves as an additive control. The other two detect the sequence pattern ``A followed by B.'' One uses a numeric representation and the other a boolean representation. Both compute the same conjunction but place it in different compiler-generated coordinates.

At sequence length 6, we enumerate all 243 possible input sequences. After excluding the position with no predecessor, this gives 1,215 position-level examples. We compare two bases. The \emph{native} basis contains the full set of compiler variables, including the completed AB detector. The \emph{restricted} basis contains only variables available before the conjunction has been formed, such as the current-token and previous-token labels.

\paragraph{Does the apparent interaction order change with the basis?}

For the current-token control, an additive model is sufficient in either basis. Both the full native basis and the token-only basis reach held-out position-level $R^2=1.0$.

The AB detectors are also additive in the native basis. This does not mean that the conjunction has disappeared. The compiler has already created an explicit coordinate that records whether ``A followed by B'' is true. Once that completed detector is available as one coordinate, an additive readout can use it directly.

The result changes in the restricted basis. This basis contains the current-token and previous-token information but no completed AB detector. An additive model explains only about half of the detector variance. Adding pair products restores held-out $R^2=1.0$ and selects the intended term:
\[
\text{previous-token-A}\times\text{current-token-B}.
\]
The underlying program has not changed. Only the coordinates have changed. The AB computation is first-order in the native basis and interactional in the restricted pre-conjunction basis.

\begin{figure}[t]
\centering
\includegraphics[width=0.95\linewidth]{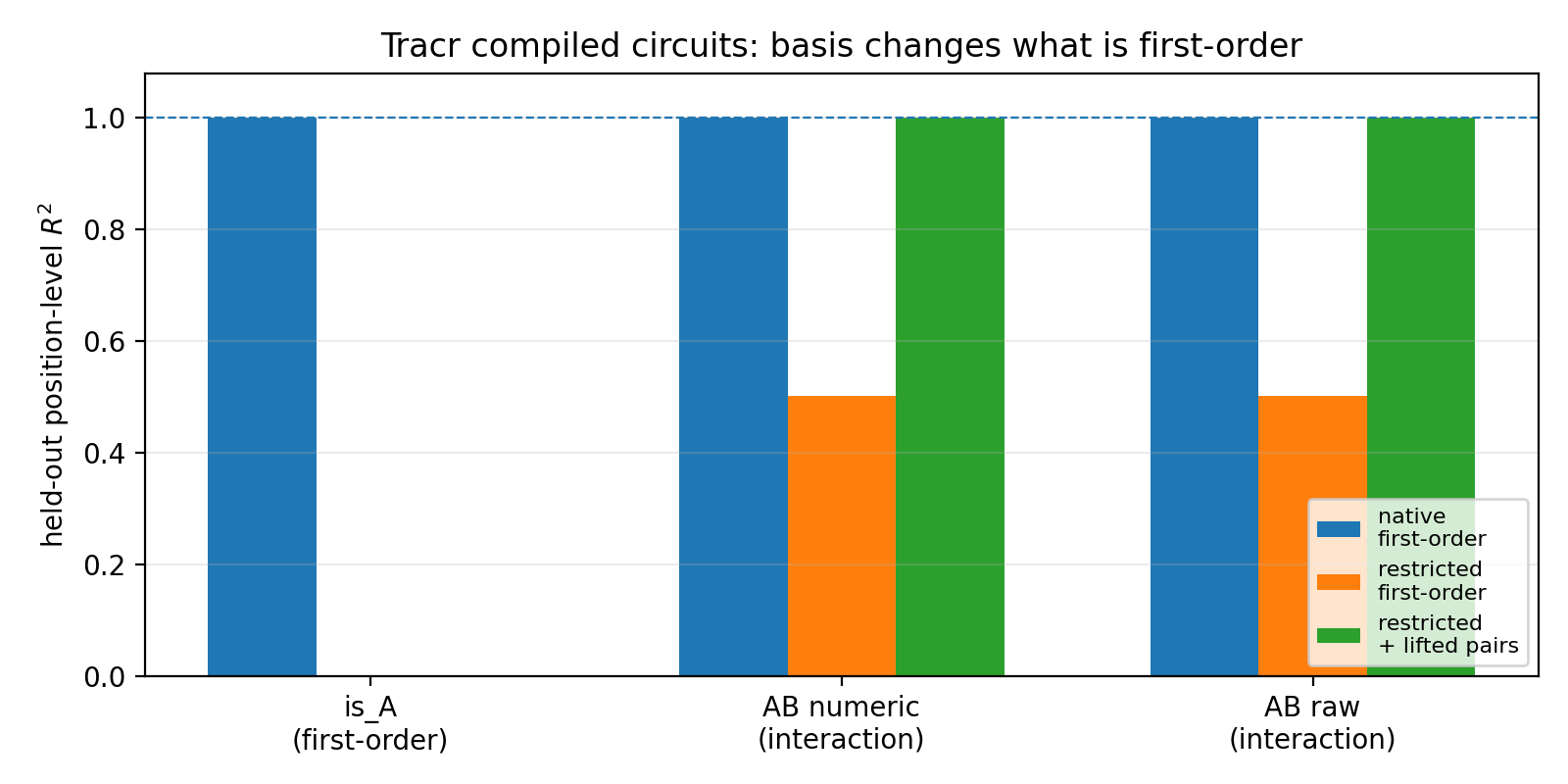}
\caption{Basis comparison in Tracr circuits. The current-token control is additive in both bases. The AB detectors are additive in the native compiler basis because it contains an explicit detector coordinate. In the restricted pre-conjunction basis, additive features explain only about half the variance, while lifted pair features recover the intended conjunction and restore held-out $R^2=1.0$.}
\label{fig:tracr}
\end{figure}

\paragraph{Does the readout use the reconstructed detector?}

Predicting a compiler label does not show that the model's output depends on it. We therefore intervene in the final residual stream, immediately before the readout. The numeric detector group contains the detector and its AB/OTHER coordinates. The boolean group contains the True/False detector and map coordinates.

We first ablate the detector group. We then reconstruct it from the restricted pre-conjunction basis using either additive features or lifted pair features and write that reconstruction back into the residual stream. This tests whether the recovered detector supplies information that the final readout actually uses.

The intervention location is deliberate. By editing the final residual stream, we isolate the connection between the detector subspace and the readout. We do not claim that the same reconstruction would survive the remaining transformer computation if it were written into an earlier layer.

Ablating the detector group drops held-out readout $R^2$ from $1.00$ to about $-0.05$ for the numeric program and $-0.51$ for the boolean program. The final output therefore depends on these detector coordinates. Writing back the additive reconstruction restores readout $R^2$ to only about $0.55$ for both programs. Writing back the lifted reconstruction restores the original $R^2=1.00$.

The same pattern appears when we score reconstruction of the detector group itself. Additive features reach about $.68$ for the numeric group and $0.93$ for the boolean group. Lifted features reconstruct both groups perfectly. The apparently high additive score for the boolean group is misleading: it mostly captures the common negative case and misses the rare positive conjunction that matters to the output.

The negative $R^2$ after boolean ablation also has a simple explanation. The boolean group contains complementary True and False coordinates. Setting both to zero creates a state that never occurs during normal execution. The readout must extrapolate from this out-of-distribution state and performs worse than a constant mean predictor. This is an ablation artifact, not evidence that the detector is harmful.

\begin{figure*}[t]
\centering
\includegraphics[width=0.98\textwidth]{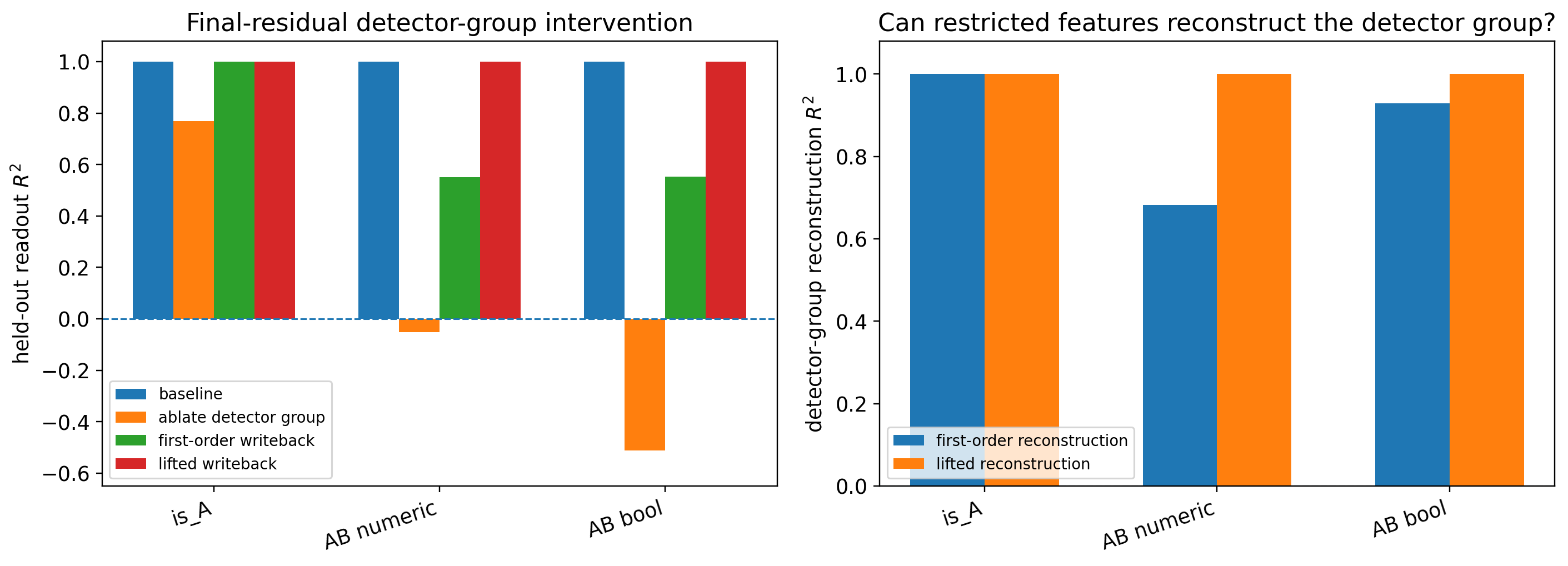}
\caption{Detector-group writeback in Tracr circuits. Left: ablating the final-residual detector group destroys output prediction for the AB programs, while writing back the lifted reconstruction restores the original held-out $R^2$. Right: additive features only partially reconstruct the detector groups, whereas lifted features reconstruct them completely.}
\label{fig:tracr_group}
\end{figure*}

The conclusion is direct. The AB computation is additive in a basis that contains its completed detector, interactional in a basis that contains only its inputs, and additive again after those inputs are lifted with pair features. The writeback test shows that the recovered detector subspace is used by the final readout.

\subsection{IOI reproduces conditional backup and ranks the primary--negative interaction first}\label{sec:ioi}

Tracr lets us check an observer against compiler labels and a known program. A pretrained language model removes that ground truth. We therefore end with the indirect-object identification (IOI) task in GPT-2-small. Prior work has identified a circuit for this task and documented several qualitative behaviors, including a conditional backup response~\cite{wang2022ioi,mcdougall2023copy}. This gives us a useful intermediate case: we know which groups to examine and roughly what they do, but we do not know the finite effect of every possible intervention.

Consider the prompt ``When Alice and Bob went to the store, Alice gave a book to \ldots''. The model should predict Bob, the indirect object (IO), rather than Alice, the repeated subject (S). Three documented head groups contribute to this choice. Primary Name Movers write evidence for Bob toward the output. Backup Name Movers provide another route for writing Bob and become important when the primary route is weakened. Negative Name Movers are different: they write a negative signal against names that earlier components have already begun to copy. They act as a copy-suppression or calibration mechanism rather than as another answer-writing path~\cite{mcdougall2023copy}. On the IOI intervention surface studied here, their net effect lowers the IO-versus-subject readout, so removing them can improve that particular readout.

Let $P$ denote the three primary Name Mover heads $(9.9,\allowbreak 9.6,\allowbreak 10.0)$, $B$ the eight Backup Name Movers $(9.0,\allowbreak 9.7,\allowbreak 10.1,\allowbreak 10.2,\allowbreak 10.6,\allowbreak 10.10,\allowbreak 11.2,\allowbreak 11.9)$, and $E$ the two Negative Name Movers $(10.7,\allowbreak 11.10)$. A label such as $9.9$ means layer 9, head 9. We measure the final-token margin
\[
M=\logit(\mathrm{IO})-\logit(\mathrm{S}).
\]
In the example, $M>0$ means that Bob ranks above Alice, while $M<0$ means that Alice ranks above Bob. A change in $M$ does not necessarily change the emitted token: the output flips between these two names only if the margin crosses zero, and another vocabulary token could also rank above both.

For an ablated set of heads $U$, let $\Delta(U)$ be the clean margin minus the ablated margin. Positive $\Delta(U)$ means that the ablation moves the model away from Bob and toward Alice. Negative $\Delta(U)$ means that it moves the model toward Bob.

We use two studies because complete group ablations and prediction over ordinary subset masks answer different questions. The direct study uses 21 selected group masks, 128 matched name pairs, and eight prompt conditions formed by four lexical frames in the standard ABBA and BABA orders. These two orders reverse how the names are introduced relative to the person who later repeats as the subject. We repeat the study with two ablation conventions. Template-conditioned mean ablation replaces a selected head output with its average clean activation on that prompt template; zero ablation replaces it with zero. Agreement across the two reduces the chance that the interaction ordering comes from one replacement convention. This study asks what happens at the corners of the intervention space, where whole groups are removed.

The predictive study measures 240 masks over 256 name pairs on one prompt template. A masked head is replaced by its average clean activation on prompts from that template. We score the 239 non-clean masks using five folds, while retaining the clean mask as an anchor in every training fold. This study asks which measurement model best predicts unseen subset interventions drawn from the tested mask distribution.

\paragraph{Direct group ablations reproduce conditional backup.}

For two groups $A$ and $B$, define
\[
I_{AB}=\Delta(A+B)-\Delta(A)-\Delta(B).
\]
If their effects add independently, $I_{AB}=0$. A positive value means that removing both groups causes more damage than we would predict by adding their separate effects.

The primary and backup groups give a concrete example. On the original template, removing $P$ lowers the Bob--Alice margin by $0.463$, while removing $B$ lowers it by $0.350$. An additive model therefore predicts that removing both groups will lower the margin by $0.813$. The measured reduction is instead $1.869$, giving $I_{PB}=1.055$.

In terms of the prompt, the backup path looks relatively unimportant while the primary heads are still writing Bob. Once the primary path is gone, losing the backups becomes much more damaging. For illustration, suppose the clean model favors Bob by $1.5$ logits. The additive estimate predicts that Bob will remain ahead by $0.687$ after both groups are removed. The measured joint effect instead gives a margin of $-0.369$, placing Alice ahead of Bob. The value $1.5$ is illustrative, but it shows what the interaction means for an actual prediction: an additive observer can put the model on the wrong side of the Bob--Alice decision boundary.

The Negative Name Movers produce a different sign. Averaged over the eight lexical/order conditions, mean ablation gives $\Delta(E)=-2.270$ and $\Delta(P+E)=-0.136$; zero ablation gives $-2.335$ and $-0.769$. Removing $E$ alone therefore improves the IO-versus-subject margin. This is consistent with copy suppression: a mechanism that improves calibration over the broader model distribution can still oppose a narrow task-specific logit difference.

\begin{table}[t]
\centering
\caption{Direct IOI group interactions across four lexical frames in ABBA and BABA order, with 128 shared matched name pairs. Intervals use a crossed bootstrap that resamples lexical frames and one shared name-pair vector while retaining both orders. The last column divides by the number of possible cross-group head pairs; it does not assign the same effect to every individual pair.}
\label{tab:ioi-direct}
\scriptsize
\setlength{\tabcolsep}{2pt}
\begin{tabular}{@{}lccc@{}}
\toprule
Pair (heads) & \shortstack{mean ablation\\$I$ [95\%]} & \shortstack{zero ablation\\$I$ [95\%]} & \shortstack{$I$/pair\\(mean/zero)} \\
\midrule
$P\times B$ (24) & 1.080 [.912, 1.262] & 1.268 [1.013, 1.558] & 0.045/0.053 \\
$P\times E$ (6)  & 1.830 [1.569, 2.076] & 1.681 [1.474, 1.889] & 0.305/0.28 \\
$B\times E$ (16) & .502 [0.429, 0.576]    & 0.518 [0.416, 0.618]    & 0.031/0.032 \\
\bottomrule
\end{tabular}
\end{table}

\paragraph{The primary--negative interaction is larger.}

The $P\times E$ result also has a direct interpretation in the Alice--Bob example. Under mean ablation, removing the primary heads lowers the Bob--Alice margin by about $0.304$. Removing the Negative Name Movers while leaving the primary path intact raises the margin by $2.270$. If these effects were independent, removing both groups would raise the margin by
\[
2.270-0.304=1.966.
\]
The observed increase is only $0.136$.

The reason is conditional copy suppression. While the primary heads are writing Bob, the Negative Name Movers have a strong copied-name signal to act against. Removing the negative heads then releases that suppression and gives Bob a large boost. Once the primary heads have also been removed, much less Bob evidence remains for the negative heads to suppress. Removing them therefore provides almost no boost. The difference between the predicted and observed responses, $1.830$ logits, is the measured $P\times E$ interaction.

Every tested lexical/order condition gives the same ordering, $PE>PB>BE$, under both ablation conventions. The $PE-PB$ contrast is $0.750$ [$0.562,0.934$] under mean ablation and $0.413$ [$0.207,0.611$] under zero ablation. Both contrasts remain positive when we remove any one lexical frame.

The difference is not an artifact of group size. After dividing by the number of possible cross-group head pairs, the normalized $PE$ effect is $6.8\times$ the normalized $PB$ effect under mean ablation and $5.3\times$ under zero ablation.

These pair terms are not a complete circuit decomposition. A three-way residual of $0.319$ [$0.215,0.439$] remains under mean ablation, and $0.601$ [$0.417,0.790$] remains under zero ablation. Table~\ref{tab:ioi-direct} therefore ranks three documented group interactions. It does not assign all circuit behavior to pairs or claim that every individual $P$--$E$ head pair has the same effect.

\paragraph{Held-out subset prediction gives the same ordering.}

The predictive study asks whether these interactions help predict unseen subset masks, rather than only complete group ablations. We stratify masks by the exact number of removed primary heads, a binned count of removed Backup Name Movers, and the exact number of removed Negative Name Movers. We oversample masks with at least two removed primary heads because conditional backup is most visible after the primary path has been weakened.

A count-additive baseline gives each group its own flexible dose response. It can learn, for example, that removing four backup heads has more than twice the effect of removing two. What it cannot express is that removing a Negative Name Mover helps Bob while the primary writers remain but helps much less after those writers have been removed.

We represent that dependence by adding one normalized count product at a time:
\[
q_Pq_B,\qquad q_Pq_E,\qquad q_Bq_E,
\]
where $q_P=n_P/3$, $q_B=n_B/8$, and $q_E=n_E/2$. These are interactions between group-level ablation counts, not between identified pairs of individual heads.

The count-additive baseline has MAE $.386$ and $R^2=0.752$. A per-head additive check gives similar values: MAE $0.379$ and $R^2=0.759$. Adding the $PB$, $PE$, or $BE$ term gives MAE $0.377$, $0.259$, and $0.386$, respectively, and $R^2=0.774$, $0.887$, and $0.754$.

Across 2,000 paired prompt bootstraps with fixed five-fold splits, $PB$ improves MAE by $0.0083$ (95\% CI $[0.0064,0.0101]$), while $PE$ improves it by $0.1265$ (95\% CI $[0.1195,0.1338]$). The $BE$ change is $-0.0002$ (95\% CI $[-0.0015,0.0011]$), which is indistinguishable from no improvement. Adding all three terms gives MAE $0.236$, $R^2=0.907$, and an MAE improvement of $0.1499$ (95\% CI $[0.1425,0.1578]$).

The ordering also survives changes to the fold assignment. The $PE$ gain is positive in all 200 audited assignments, $PB$ in 199, and $BE$ in 154. Among the three tested single interaction terms, $PE$ is therefore both the largest and the most stable. $PB$ gives a smaller but repeatable gain, while $BE$ is split-sensitive.

\begin{figure}[t]
\centering
\begin{subfigure}{0.48\linewidth}
\includegraphics[width=\linewidth]{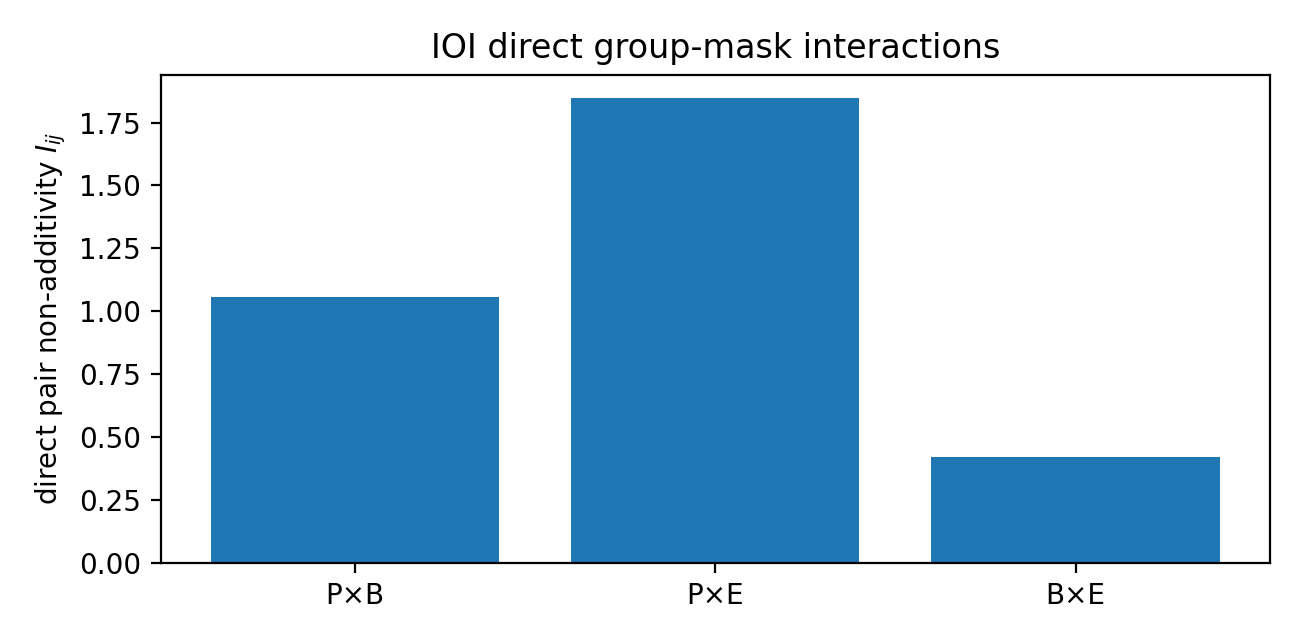}
\caption{Direct group-mask interactions}
\end{subfigure}\hfill
\begin{subfigure}{0.48\linewidth}
\includegraphics[width=\linewidth]{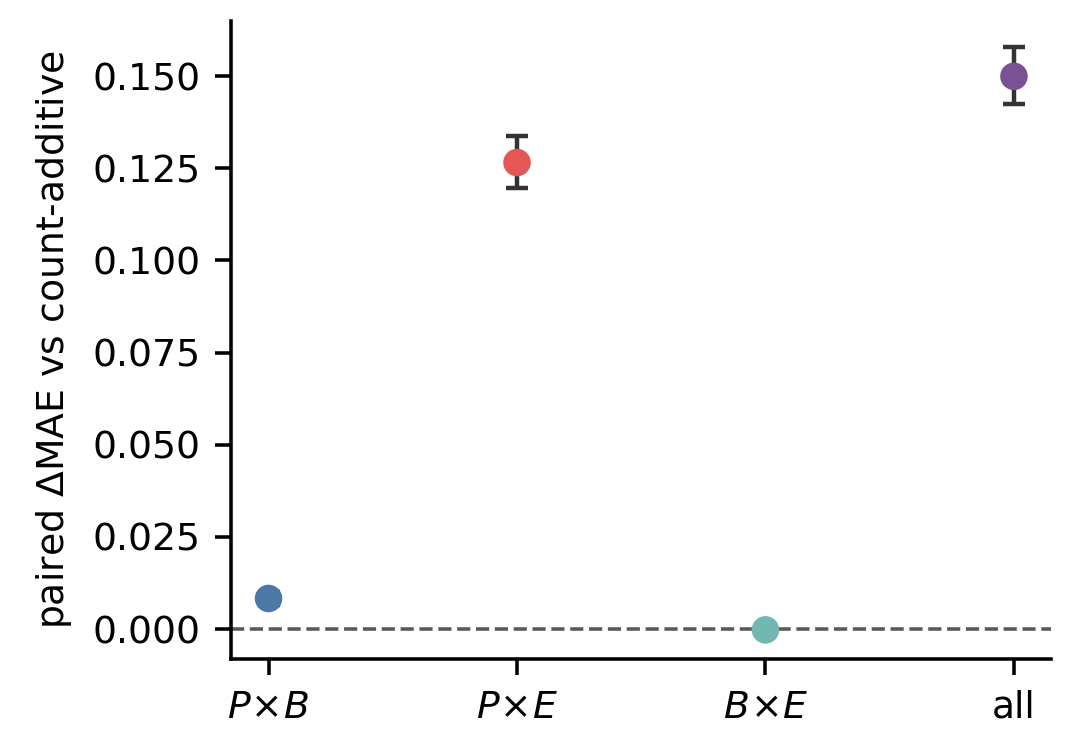}
\caption{Held-out prediction gain}
\end{subfigure}
\caption{IOI interaction measurements. Left: direct group interactions on the original-template mean-ablation surface. Right: held-out gains from adding one count-product term to the count-additive model. $P\times E$ gives the largest of the three tested gains; the interval for $B\times E$ crosses zero.}
\label{fig:ioi-anchor}
\end{figure}

\paragraph{What mechanistic tomography adds.}

Prior IOI work supplies the candidate head groups and their qualitative roles. Mechanistic tomography supplies the procedure for testing whether those groups are sufficient for the intended prediction task. We first fit the additive family and test it on held-out interventions. Its residual shows that separate group effects cannot predict the finite response, so we lift the design with the three group-pair terms. The $P\times E$ column expresses that removing Negative Name Movers has one effect while the primary path remains and another after it has been weakened; adding it reduces held-out MAE from $0.386$ to $0.259$. MT does not discover the IOI groups. It identifies when the additive family has reached its limit and which relations among the documented groups are needed to predict finite interventions under the declared distribution.

\paragraph{What the two measurements tell us.}

The direct and predictive studies agree on the ordering of the three tested interactions, but they should not be treated as interchangeable. A complete $P+B$ ablation measures the corner where both groups are gone. A predictive term measures how useful that interaction is across the subset masks the observer will encounter. A strong corner interaction can make only a small average predictive contribution if the measurement distribution rarely approaches that corner.

This distinction explains why the known $P\times B$ backup response can be clear under complete group ablation yet add much less predictive value than $P\times E$ over the tested subset distribution. It also states the limit of the claim. We find that $P\times E$ is the strongest and most stable of three group-level interaction terms under the tested design. We do not identify a particular head pair as the dominant circuit edge, and the predictive comparison remains limited to one prompt template and mean ablation.

\subsection{Finite calibration is sufficient on the tested Qwen-2.5-7B surface}\label{sec:qwen}

The preceding experiments use exact posteriors, planted effects, compiler labels, or documented circuit groups. We next ask whether the operational procedure can still be used when none of those references is available. This is a weaker test: it cannot tell us whether an observer has recovered the model's mechanism. It can still tell us whether the observer predicts the finite responses it was built to support.

We fix Qwen-2.5-7B-Instruct~\cite{qwen2025report} and a refusal-margin readout. The readout compares the model's log probability of four refusal stems with its log probability of four compliance stems at the first generated position. Harmful prompts come from HarmBench and safe, superficially safety-related prompts come from XSTest~\cite{mazeika2024harmbench,rottger2024xstest}. We use these prompts to obtain a varied refusal surface, not to evaluate the model's safety. The primary endpoint has no generated continuation and no LLM judge.

The intervention basis contains eight layerwise harmful-versus-benign contrast directions, measured at layers 5, 8, 11, 14, 17, 20, 23, and 26 at the last prompt token. We hold this basis, the prompt splits, and the action library fixed. The design contains 401 actions at four mask densities. Calibration and validation use scales $0.5$ and $1.0$; the final test uses the held-out scale $0.75$. The test set crosses 128 held-out actions with 224 held-out prompts.

We compare two response maps. The calibrated additive map contains the declared first-order action features and scale corrections. The lifted map adds all 28 products between the eight layer coefficients. Both models choose their ridge penalty on the same validation split. We then evaluate both on the same held-out action--prompt pairs.

\begin{table}[t]
\centering
\caption{Held-out prediction on the Qwen-2.5-7B refusal-response surface. The test contains 128 actions at the unseen scale $0.75$ and 224 prompts.}
\label{tab:qwen}
\scriptsize
\setlength{\tabcolsep}{5pt}
\begin{tabular}{lrrrr}
\toprule
Response map & parameters & MAE & RMSE & $R^2$ \\
\midrule
Calibrated additive & 20 & .003790 & .004699 & .9829 \\
Lifted pairwise & 48 & .003801 & .004613 & .9835 \\
\bottomrule
\end{tabular}
\end{table}

The additive map reaches held-out $R^2=.9829$ and MAE $.003790$. The lifted map reaches $R^2=.9835$ and MAE $.003801$. The estimated relative MAE improvement from lifting is therefore $-0.29\%$. A paired prompt-family-by-action bootstrap gives a 95\% interval of $[-3.56\%,5.65\%]$.

The result is a stopping decision, not an equivalence claim. The interval includes zero, so we detect no lifted advantage. It also extends slightly above the predeclared 5\% practical threshold, so the experiment does not rule out every meaningful pairwise improvement. Within this basis, prompt distribution, and intervention regime, however, the measured residual gives no reason to replace the calibrated additive map with the larger one.

This result complements the IOI study. In IOI, the additive residual points to a specific missing relation and the $P\times E$ term materially improves held-out prediction. On the Qwen surface, finite calibration already makes the simpler map adequate. Model size alone therefore does not decide which measurement order is needed; the declared basis and held-out response do. Because the Qwen endpoint is behavioral, this experiment establishes transfer of the measurement procedure to a modern open-weight model, not recovery of a ground-truth mechanism or a scaling law.\footnote{The public artifact repository at \url{https://github.com/kwisatzh/mechanistic-tomography} contains the complete experiment, frozen measurements, source and environment fingerprints, and a Colab notebook that reproduces Table~\ref{tab:qwen} without loading the model. The optional GPU path reruns the finite measurements from the pinned model revision.}

\FloatBarrier
\section{Related work}

\textbf{Ground-truthed belief-state laboratories.} Bayesian wind tunnels train transformers on probabilistic systems whose exact posterior is available, then study the geometry and learning dynamics of the learned representations~\cite{agarwal2025geometry,agarwal2025gradient,agarwal2025scaling}. HMM-trained transformers can encode belief states with nontrivial residual-stream geometry~\cite{shai2024belief}. Bernstein--von Mises theory describes asymptotic Gaussian posteriors for fixed latent parameters~\cite{vaart1998asymptotic}; our task instead filters a hidden state that changes over time. We use the wind-tunnel setup because the exact filtered posterior supplies an external reference for the observer. Mechanistic tomography adds the downstream question: what happens when that estimate enters a fixed control loop, and can target control hide movement of a nuisance state?

\textbf{Methods that obtain effect measurements.} Activation patching and causal tracing use finite internal interventions to localize components, with conclusions that can depend on the corruption and readout metric~\cite{meng2022rome,zhang2024patching}. Attribution patching obtains a local first-order map from gradients~\cite{syed2024attrib}; integrated Hessians and HVP correction add interaction or curvature information~\cite{janizek2020integrated,zhang2026lies}. SHAP and Faith-Shap use coalition measurements to assign main and higher-order feature effects~\cite{lundberg2017shap,tsai2023faithshap}. Compressed-sensing localization, SPEX, and ProxySPEX recover sparse components or interactions from aggregate masks~\cite{bair2026compressed,kang2025spex,butler2025proxyspex}, while noisy finite differences supply the classical scale tradeoff~\cite{shi2022finite}. Mechanistic tomography does not claim that these methods estimate the same quantity. It identifies their shared measurement structure and adds a finite-response residual, a calibration stopping rule, a scale--density design bound, basis-relative writeback, and observer validation.

\textbf{Methods that test causal meaning.} Causal abstraction asks whether interventions on a model correspond to interventions on a proposed higher-level variable, and DAS searches for representations that realize such variables~\cite{geiger2024das,geiger2025causal}. Probe controls distinguish information that is merely decodable from information tied to an intended role~\cite{hewitt2019probes}. MIB shows empirically that mechanistic conclusions depend on the target, ablation convention, and causal metric~\cite{mueller2025mib}. These methods ask whether a proposed representation has the claimed meaning. Mechanistic tomography addresses the preceding measurement problem: which interventions identify the proposed map, what error remains, and whether the estimate predicts held-out interventions. Its Tracr writeback and control tests then supply use-specific evidence beyond fit quality.

\textbf{Methods that evaluate action and control.} Steering benchmarks measure target effects, collateral changes, coherence, and reliability~\cite{bhalla2025unifying,wu2025axbench,tan2024steering}. Orgad et al. argue that interpretability should be judged by the actions it enables and the evidence validating those actions~\cite{orgad2026actionable}. PID Steering supplies a concrete feedback controller for activation edits~\cite{nguyen2025pid}. This work typically begins with a chosen steering signal. Mechanistic tomography asks how the state estimate that drives such a controller was measured, whether it predicts finite responses under the available access regime, and whether its error reaches target or nuisance behavior when the controller and actuator are held fixed.

\textbf{Tomography, representation engineering, and sparse recovery.} Linear artificial tomography (LAT), introduced in representation engineering and evaluated by AxBench, estimates a concept direction from contrastive activations~\cite{zou2023repeng,wu2025axbench}. Despite the shared word, mechanistic tomography targets component-effect and interaction maps from designed interventions rather than a contrastive concept direction. Its closer structural precedent is network tomography, which estimates hidden traffic quantities from boundary measurements and can use compressed sensing when the hidden object is sparse~\cite{vardi1996tomo,firooz2010cs}. Traffic-engineering work also shows that the most accurate traffic-matrix estimate need not produce the best routing decision~\cite{roughan2003te}. Mechanistic tomography brings both ideas into interpretability: design measurements for a hidden internal map, then report reconstruction and downstream action separately when the estimate becomes an observer.

\section{From effect maps to tested observers}\label{sec:discussion}

Mechanistic tomography gives researchers a workflow rather than a ranking of interpretability methods. Table~\ref{tab:designs} summarizes that workflow across access regimes; the four rules below state how to use it.

\begin{table}[t]
\centering
\caption{Observer selection as a measurement decision.}
\label{tab:designs}
\scriptsize
\setlength{\tabcolsep}{2pt}
\begin{tabular}{p{0.20\linewidth}p{0.21\linewidth}p{0.25\linewidth}p{0.25\linewidth}}
\toprule
Regime & Available access & Start with & Escalate when \\
\midrule
Additive, white-box & gradients and finite evaluations & AtP, then a few finite probes & held-out error does not reduce to a small gain correction \\
Additive, forward-only & scalar intervention responses & sparse subset measurements & the map is dense or held-out prediction fails \\
Interactional, white-box & gradients, HVP queries, and finite evaluations & AtP plus HVPs for local curvature & local support fails, or finite-scale probes show that calibration is needed \\
Interactional, forward-only & scalar intervention responses & lifted subset measurements & lifted coverage or conditioning fails \\
Uncertain basis & candidate coordinates and interventions & simplest declared basis & residuals persist or writeback fails \\
Downstream use & fixed controller and actuator & lowest-dimensional validated observer & target, collateral, or transfer tests fail \\
\bottomrule
\end{tabular}
\end{table}
\FloatBarrier

\begin{enumerate}[leftmargin=*,label=\textbf{\arabic*.}]
\item \textbf{State the intended use before choosing the measurements.}

A study should declare its target, basis, access regime, intervention scale, context distribution, and downstream use before choosing measurements (Secs.~\ref{sec:formulation} and~\ref{sec:limits}).

\item \textbf{Start with the cheapest measurement family that can represent the target.}

With gradients, begin with the local map and use finite probes to test it; with forward-only access, aggregate measurements can recover a compressible finite-effect map (Secs.~\ref{sec:decisions}, \ref{sec:aggregate}, and~\ref{sec:step0}).

\item \textbf{Let held-out residuals choose the next measurement.}

A shared scale error suggests calibration, conditional error suggests interactions, and persistent error after lifting may require a different basis; Secs.~\ref{sec:interactions}--\ref{sec:qwen} show each branch and a case in which the simpler family is sufficient.

\item \textbf{Validate an observer through its intended action.}

Report reconstruction, finite-response prediction, and downstream control separately; Secs.~\ref{sec:control} and~\ref{sec:result-control} show why target success alone is insufficient.
\end{enumerate}

\noindent\textbf{Where the evidence remains narrow.}\par
\noindent The planted interaction experiment assumes sparsity and a basis containing the correct pair terms; natural mechanisms may violate either assumption. Tracr supplies privileged compiler labels. The IOI direct ordering survives eight lexical/order conditions and two ablation conventions, while its predictive comparison uses one prompt template and mean ablation. Qwen-2.5-7B extends the procedure to a larger instruction-tuned model, but its refusal-margin endpoint provides behavioral ground truth only. It does not show that the fitted map recovers a represented state or circuit, and one model, basis, and response surface cannot establish a scaling law. The scale--density result is also a conditional design bound rather than a measured $\eps\times\rho$ surface. The paper therefore establishes a procedure and several concrete regimes, not their prevalence across mechanistic interpretability.

\noindent\textbf{From the workflow to ObserverBench.}\par
\noindent The next question is how often this decision path branches in the same way across models, tasks, observers, and interventions. Isolated case studies cannot answer it if each chooses its own masks, held-out tests, controller, and success metric. ObserverBench is the natural next step~\cite{erramilli2026observerbench}: tasks declare the controlled model or system, intervention distribution, controller, actuator, and held-out metrics, while researchers supply the observer.

This separation supports adoption as well as scientific control. A researcher should not need to rebuild every model, circuit, and control loop to test a new observer. A shared interface can return the same prediction, control, collateral, and robustness measures for attribution maps, SAE features, learned probes, interaction-aware estimators, and new designs. Mechanistic tomography supplies the logic for deciding what must be measured and validated; ObserverBench tests how well different observers satisfy that contract as the available ground truth becomes weaker.

\clearpage
\appendix

\section{From one finite intervention to a recoverable map}\label{app:proof-measurement}

The model is nonlinear, but the tomography equation does not require it to be globally linear. It requires a more limited statement: around each clean activation, the response to a finite intervention can be separated into a local linear contribution and a residual. The proof performs that separation for one mask and then stacks the resulting measurements.

\begin{proof}

\noindent\textbf{Step 1: Treat the mask as one activation-space direction.}\par

Fix a context $c$. The mask $a$ assigns a coefficient to each component direction, so the complete activation displacement is
\begin{equation}
    u_c=D_ca.
\end{equation}
This notation turns an intervention that may touch many components into one direction $u_c$ through activation space. The scalar $\eps$ determines how far the intervention moves along that direction.

\noindent\textbf{Step 2: Separate the tangent response from curvature.}\par

Apply the second-order Taylor theorem along the path from $h_c$ to $h_c+\eps u_c$. There is a point $\xi_c$ on this segment such that
\begin{equation}
\begin{aligned}
    f_c(h_c+\eps u_c)-f_c(h_c)
    &=
    \eps\,\nabla f_c(h_c)^\top u_c\\
    &\quad+
    \frac{\eps^2}{2}
    u_c^\top\nabla^2 f_c(\xi_c)u_c.
\end{aligned}
\end{equation}

The first term is the response predicted by the tangent at the clean activation. The second is the part caused by curvature along the finite path. This is the only place where differentiability enters the reduction.

\noindent\textbf{Step 3: Express the tangent response as a measurement of component effects.}\par

Because $D_c=[d_1(c),\ldots,d_n(c)]$,
\begin{equation}
\begin{aligned}
    \nabla f_c(h_c)^\top D_ca
    &=
    \sum_{i=1}^n
    a_i\,\nabla f_c(h_c)^\top d_i(c).
\end{aligned}
\end{equation}
Each term is the local effect of one component direction, multiplied by the coefficient assigned to it by the mask.

The target map is
\begin{equation}
\begin{aligned}
    x_i
    &=
    \E_c\left[
    \left.
    \frac{\partial}{\partial\delta}
    f_c(h_c+\delta d_i(c))
    \right|_{\delta=0}
    \right]\\
    &=
    \E_c\left[
    \nabla f_c(h_c)^\top d_i(c)
    \right].
\end{aligned}
\end{equation}
Averaging the tangent response over contexts therefore gives
\begin{equation}
    \E_c\left[
    \nabla f_c(h_c)^\top D_ca
    \right]
    =
    a^\top x.
\end{equation}

This is the key reduction. The finite intervention may change many components, but its first-order response is one linear measurement of their unknown effects. The mask supplies the weights in that measurement.

Divide the Taylor expansion by $\eps$ and average over contexts:
\begin{equation}
    y_\infty(a,\eps)
    =
    a^\top x+r(a,\eps),
\end{equation}
where
\begin{equation}
    r(a,\eps)
    =
    \frac{\eps}{2}
    \E_c\left[
    u_c^\top\nabla^2 f_c(\xi_c)u_c
    \right].
\end{equation}

The division by $\eps$ removes the intervention scale from the linear term. One power of $\eps$ remains in the curvature term, which is why nonlinear error grows linearly with the intervention scale after normalization.

Using $\|\nabla^2 f_c\|_{op}\le L$,
\begin{equation}
\begin{aligned}
    |r(a,\eps)|
    &\le
    \frac{\eps}{2}
    \E_c\left[
    \|\nabla^2 f_c(\xi_c)\|_{op}
    \|u_c\|_2^2
    \right]\\
    &\le
    \frac{L}{2}\eps
    \E_c\|D_ca\|_2^2.
\end{aligned}
\end{equation}

The bound depends on both the scale $\eps$ and the displacement produced by the mask. A dense mask need not have the same nonlinear error as a sparse mask, even when both use the same scalar intervention size.

\noindent\textbf{Step 4: Add the error from estimating the response.}\par

The population response is not normally available exactly. We estimate it from a finite batch and possibly from stochastic or finite-precision model evaluations. Write the empirical response as
\begin{equation}
    \widetilde y_B(a,\eps)
    =
    y_\infty(a,\eps)
    +\frac{e_B(a)}{\eps}
    +\zeta_B(a,\eps).
\end{equation}

The two empirical terms appear separately because they scale differently. The quantity $e_B(a)$ is error in the unnormalized response numerator. After we divide the response by $\eps$, this error becomes $e_B(a)/\eps$. It can therefore dominate when the intervention is very small.

The term $\zeta_B(a,\eps)$ represents sampling error that remains after normalization. Pairing the clean and intervened evaluations can cancel much of their shared variation. This term therefore need not grow as $1/\eps$. Pairing is part of the measurement design, not only a statistical convenience.

Combining the curvature and empirical terms gives
\begin{equation}
    \widetilde y_B(a,\eps)
    =
    a^\top x+w(a,\eps),
\end{equation}
with
\begin{equation}
    w(a,\eps)
    =
    r(a,\eps)
    +\frac{e_B(a)}{\eps}
    +\zeta_B(a,\eps).
\end{equation}
Hence
\begin{equation}
    |w(a,\eps)|
    \le
    \frac{L}{2}\eps\E_c\|D_ca\|_2^2
    +\frac{\tau_B(a)}{\eps}
    +\nu_B(a).
\end{equation}

\noindent\textbf{Step 5: Stack the measurements.}\par

Repeat the construction for masks $a_1,\ldots,a_m$. Put each row $a_j^\top$ into the matrix $A$, each empirical response into $\widetilde y$, and each residual into $w$. The $m$ scalar equations then become
\begin{equation}
    \widetilde y=Ax+w.
\end{equation}

\end{proof}

The proof gives the observation model used by mechanistic tomography. Each finite intervention gives one equation about the local component-effect map. The residual states what that equation misses and shows how the miss depends on the intervention.

The lemma alone does not guarantee recovery. The rows of $A$ must still distinguish the possible maps. Curvature and sampling error may also depend on the mask, so $w$ is not independent noise added after the design has been chosen.

\subsection{When several measurements identify the map}

Suppose two components are always changed together. Their columns in $A$ are then identical. We can measure their combined effect, but no algorithm can tell which component caused it. Recovery therefore depends on how the masks vary, not only on how many masks we collect.

If only $k$ of the $n$ component effects matter, aggregate masks can reduce the number of forward interventions. Under the standard sparse-separation conditions, basis-pursuit denoising solves
\begin{equation}
    \min_z \|z\|_1
    \quad \text{subject to}\quad
    \|Az-\widetilde y\|_2\le \eta,
\end{equation}
where $\eta\geq\|w\|_2$. Classical sparse recovery then gives
\begin{equation}
    \|\widehat x-x\|_2
    \leq C_0\eta+C_1\frac{\sigma_k(x)_1}{\sqrt{k}}.
\end{equation}
The first term is the price of measurement error. The second is the price of treating an approximately sparse map as exactly sparse. With suitably normalized random designs, $m\gtrsim k\log(n/k)$ measurements are enough with high probability.

For example, suppose only three of one hundred heads have substantial effects. Each aggregate mask measures a different weighted sum of all one hundred effects. If the masks separate the possible three-head supports, the equations can reveal which heads matter and estimate their effects without patching every head separately.

Coordinate patching has a simple forward-only lower bound. With fewer than $n$ coordinate measurements, at least one coordinate is never observed. The zero map and a map supported only on that coordinate produce the same data, so no method can distinguish them in the worst case. This does not say that every transformer needs all $n$ patches; a useful prior can improve average performance.

Attribution patching is the white-box exception. One backward pass returns all coordinates of the local first-order map, so there is no inverse problem to solve at that stage. Finite measurements remain useful for a different reason: they test whether the local map predicts interventions at the scale where it will be used.

These recovery results are classical. Mechanistic tomography supplies the measurement equation and makes the access assumptions explicit. Sparse forward recovery requires a suitable basis and a design that separates sparse alternatives. Gradient access returns the local map directly. Neither route removes the need to test finite responses on held-out interventions.

\section{When calibration is enough}\label{app:calibration}

Proposition~\ref{prop:caldim} asks a practical question: if a local observer misses the finite response, can a few numbers repair it, or does the observer need new features?

Let \(x_{\mathrm{grad}}\) be the fixed first-order baseline. Suppose it ranks the important heads correctly but underestimates every finite effect by about thirty percent. One gain may be enough:
\begin{equation}
    x_{\mathrm{finite}}(\eps)
    \approx
    (1+\theta_\eps)x_{\mathrm{grad}}.
\end{equation}
This is calibration dimension one. A two-group correction, such as separate gains for Name Movers and Negative Name Movers, has dimension two.

A gain cannot create a response that is absent from the baseline family. If an effect appears only when two components are changed together, the observer needs a pair feature. More calibration data for the same additive family cannot supply it. Held-out masks distinguish these cases: they show whether a correction transfers or merely fits correlated calibration measurements.

\begin{proof}[Proof of Proposition~\ref{prop:caldim}]
The proof separates two cases. First, the finite response lies inside the declared correction family. Second, it does not.

\noindent\textbf{Step 1: Write calibration as a small linear system.}\par

Let \(a_1,\ldots,a_m\) be the calibration masks. Put the feature row \(\phi(a_j)^\top\) into row \(j\) of \(\Phi\), and define
\[
    M=\Phi B.
\]
The columns of \(B\) are the \(r\) corrections we allow. Thus \(M\) records how those corrections appear under the chosen masks.

If the finite response belongs to this family, then for some \(\theta_\eps\),
\begin{equation}
    F_\eps(a_j)
    =
    \phi(a_j)^\top(x_0+B\theta_\eps).
\end{equation}
After subtracting the fixed baseline response, the observations satisfy
\begin{equation}
    y-\Phi x_0=M\theta_\eps+\nu,
\end{equation}
where \(\nu\) contains measurement error. The unknown now has dimension \(r\), rather than the full dimension \(q\) of the effect map.

\noindent\textbf{Step 2: Use rank to decide whether the correction is unique.}\par

In the noiseless case, suppose two corrections produce the same measurements. Then
\[
    M(\theta_1-\theta_2)=0.
\]
If \(M\) has full column rank, its null space contains only zero. Hence \(\theta_1=\theta_2\). In the ideal case, an \(r\)-dimensional correction therefore needs \(r\) independent equations. Extra measurements improve conditioning and leave separate masks for validation.

\noindent\textbf{Step 3: Track measurement error.}\par

When \(M\) has full column rank, least squares gives
\begin{equation}
    \widehat\theta-\theta_\eps=M^\dagger\nu.
\end{equation}
Therefore
\begin{equation}
    \|\widehat\theta-\theta_\eps\|_2
    \leq
    \|M^\dagger\|_2\,\|\nu\|_2.
\end{equation}
Full rank makes the correction identifiable. The size of \(\|M^\dagger\|_2\) determines whether that recovery is stable.

\noindent\textbf{Step 4: State what calibration cannot remove.}\par

If the true response lies outside the declared family, every observer in that family has held-out error at least
\begin{equation}
    \inf_\theta
    \left\|
        F_\eps(\cdot)
        -
        \phi(\cdot)^\top(x_0+B\theta)
    \right\|_{L_2(\mathcal Q)}.
\end{equation}
This lower bound remains even with unlimited noiseless calibration data. More data can find the best member of the family, but it cannot make that family express a missing interaction or other missing feature.
\end{proof}

Calibration dimension is therefore a stopping rule. A low value says that the baseline has the right structure and needs only a small finite-scale correction. A persistent held-out residual says that the measurement family must change.

\section{How scale and mask density change the measurement}\label{app:scale-density}

Corollary~\ref{cor:scale-density} joins two choices that are often made separately. The scale \(\eps\) says how far each probe moves the activation. The density \(\rho\) says how many components it changes. Both affect measurement error, and density also affects whether the inverse problem can separate the unknown effects.

A large probe is easy to observe but travels farther from the local tangent. A very small probe stays local, but dividing by \(\eps\) magnifies any error already present in the response numerator. Dense masks may cover a sparse support quickly, but they can also cause larger activation changes or create correlated columns. The result below makes those tradeoffs explicit.

\begin{proof}[Proof of Corollary~\ref{cor:scale-density}]
We first bound the error in a response and then pass that error through the recovery method.

\noindent\textbf{Step 1: Bound curvature for one mask family.}\par

Lemma~\ref{lem:measurement-reduction} bounds the curvature term for one normalized mask by
\begin{equation}
    \frac{L}{2}\eps\,\E_c\|D_ca\|_2^2.
\end{equation}
For the family \(\mathcal A_\rho\), define
\[
    \kappa_\rho^2
    =
    \sup_{a\in\mathcal A_\rho}
    \E_c\|D_ca\|_2^2.
\]
The curvature error is then at most
\begin{equation}
    \frac{L}{2}\eps\kappa_\rho^2.
\end{equation}
It grows with \(\eps\) because a larger intervention travels farther from the local approximation.

\noindent\textbf{Step 2: Add sampling error.}\par

Uniformly over the same mask family, Lemma~\ref{lem:measurement-reduction} gives
\begin{equation}
    \eta_1(\eps,\rho)
    =
    \frac{L}{2}\eps\kappa_\rho^2
    +
    \frac{\tau_B(\rho)}{\eps}
    +
    \nu_B(\rho).
\end{equation}
The first term grows with scale. The second grows when scale becomes too small. The third is the normalized sampling error left after pairing or batching.

\noindent\textbf{Step 3: Account for the inverse solve.}\par

Let \(C_{\mathrm{rec}}(A_\rho)\) bound how much the recovery method can amplify measurement error. Then
\begin{equation}
\begin{aligned}
    \mathcal J_1(\eps,\rho)
    &=
    C_{\mathrm{rec}}(A_\rho)\eta_1(\eps,\rho)\\
    &=
    C_{\mathrm{rec}}(A_\rho)
    \left[
        \frac{L}{2}\eps\kappa_\rho^2
        +
        \frac{\tau_B(\rho)}{\eps}
        +
        \nu_B(\rho)
    \right].
\end{aligned}
\end{equation}
This step connects the transformer response to the inverse problem. A badly conditioned design can turn a small response error into a large error in the recovered map.

\noindent\textbf{Step 4: Choose scale after fixing density.}\par

Fix \(\rho\), assume \(\tau_B(\rho)>0\), and treat \(C_{\mathrm{rec}}\) and \(\nu_B\) as locally independent of \(\eps\). The scale-dependent terms are
\[
    \frac{L}{2}\eps\kappa_\rho^2
    +
    \frac{\tau_B(\rho)}{\eps}.
\]
Differentiating and setting the result to zero gives
\begin{equation}
    \eps^*(\rho)
    =
    \sqrt{
        \frac{2\tau_B(\rho)}
             {L\kappa_\rho^2}
    }.
\end{equation}
At this point, the bound balances moving too far from the tangent against dividing by a probe that is too small.

Density remains a separate design choice. It changes \(\kappa_\rho\), feature coverage, and \(C_{\mathrm{rec}}(A_\rho)\). The joint design must therefore consider both \(\eps\) and \(\rho\).
\end{proof}

The formula is a design guide, not a universal setting. Density also depends on the actuator normalization. Under fixed per-coordinate amplitude, changing ten heads usually moves the activation farther than changing one. Under fixed row norm, the same total perturbation is spread across the ten heads. A useful scale--density sweep should therefore report the normalization, response error, feature coverage, and conditioning together.

\section{Why first-order measurements miss a pure interaction}\label{app:blindness}

Proposition~\ref{prop:blindness} identifies a limit of the measurement family. If neither component has a first-order effect by itself, no amount of first-order data can reveal an effect that appears only when both components change. The problem is not a weak recovery algorithm; the required pair column is absent.

\begin{proof}[Proof of Proposition~\ref{prop:blindness}]

\noindent\textbf{Step 1: First-order observations.}\par

By assumption,
\[
    x_i=x_j=0.
\]
Gradients and singleton first-order measurements therefore report zero for both components. From these observations, a model with \(H_{0,ij}=0\) and one with \(H_{0,ij}\neq0\) look the same.

\noindent\textbf{Step 2: Change both components together.}\par

Let a mask assign coefficients \(a_i\) and \(a_j\) to the two directions. Before normalization, the second-order Taylor expansion contains
\begin{equation}
    \frac{1}{2}\eps^2
    \left(2a_i a_jH_{0,ij}\right)
    +
    O(\eps^3).
\end{equation}
This term vanishes when either component is absent and appears when both are present.

\noindent\textbf{Step 3: Add the missing measurement column.}\par

After division by \(\eps\), the cross term becomes
\begin{equation}
    a_i a_j\Gamma_{\eps,ij},
    \qquad
    \Gamma_{\eps,ij}
    =
    \eps H_{0,ij}+O(\eps^2).
\end{equation}
The product \(a_i a_j\) is the design column for the pair. In a quadratic response model, the relation is exact. A lifted design can recover the interaction if it separates this pair column from the other main and pair columns.
\end{proof}

More gradients cannot reveal a pure cross term when the relevant first-order coordinates are zero. The next measurement must change: use pair-aware forward interventions or measure curvature directly.

\section{When lifted measurements recover pair effects}\label{app:proof-lifted}

Proposition~\ref{prop:lifted} adds the smallest feature family that can represent a two-component interaction. Instead of asking an additive model to absorb the extra response, it gives each candidate pair its own coordinate.

Define
\begin{equation}
    \theta_\eps=(x_\eps,\mathrm{vec}\,\Gamma_\eps)\in\R^N
\end{equation}
and replace each mask row \(a\) by
\begin{equation}
    \phi(a)=\big[a_1,\ldots,a_n,\{a_i a_j\}_{i<j}\big].
\end{equation}

\begin{proof}[Proof of Proposition~\ref{prop:lifted}]
A finite response with main and pair effects has the form
\begin{equation}
\begin{aligned}
    \widetilde y(a)
    &=
    a^\top x_\eps
    +
    \sum_{i<j}a_i a_j\Gamma_{\eps,ij}
    +
    w_\eps(a)\\
    &=
    \phi(a)^\top\theta_\eps+w_\eps(a).
\end{aligned}
\end{equation}
Stacking the masks gives
\begin{equation}
    \widetilde y=\Phi(A)\theta_\eps+w_\eps.
\end{equation}
This is an ordinary linear inverse problem in a larger set of coordinates. If \(\theta_\eps\) is sparse or compressible, the residual is bounded, and \(\Phi(A)\) separates the relevant sparse alternatives, the standard stable-recovery bound from Appendix~\ref{app:proof-measurement} applies with \(A\) replaced by \(\Phi(A)\).
\end{proof}

For two components, the model is
\[
    \widetilde y(a)
    =
    a_1x_1+a_2x_2+a_1a_2\Gamma_{\eps,12}+w_\eps(a).
\]
Changing only component 1 measures \(x_1\). Changing only component 2 measures \(x_2\). Changing both reveals any extra response that the two singleton effects do not explain. That extra response is the pair term.

\subsection{Why the lifted design needs its own check}\label{app:lifted-conditioning}

A design that separates main effects may still fail after lifting. Suppose components \(i\) and \(j\) are always changed together, so \(a_i=a_j\in\{0,1\}\) on every training mask. Then
\[
    a_i a_j=a_i=a_j.
\]
The two main effects and the pair effect create identical columns. The fitted model may predict every training response, but the data cannot say which term caused it. Repeating the same mask pattern only repeats the ambiguity.

Dense independent Rademacher masks give a useful ideal reference. Their entries are independent \(+1\) or \(-1\) values, so main columns \(a_i\) and pair columns \(a_i a_j\) are orthogonal in the population. A finite design only approximates this property. Sparse masks, fixed mask sizes, and exclusions needed to preserve model behavior can make lifted columns highly correlated or identical.

The lifted matrix must therefore be checked directly. Singular values, restricted conditioning, and column correlations show whether the masks distinguish the proposed pairs. Held-out masks should also change the co-occurrence patterns used during fitting. If the recovered pair predicts those responses, it is less likely to be an arbitrary explanation of aliased columns.

Lifted tomography does not guarantee that pair effects are sparse or sufficient. It says exactly what extra columns are needed, what design must identify them, and how to test whether they transfer.

\section{What Hessian-vector products recover}\label{app:hvp}

Corollary~\ref{cor:hvp} gives the white-box route to interactions. A gradient returns the complete local first-order map. A Hessian-vector product (HVP) asks how that gradient changes along one chosen direction. Several designed directions can therefore measure a structured local curvature map without constructing the full Hessian.

\begin{proof}[Proof of Corollary~\ref{cor:hvp}]

\noindent\textbf{Step 1: Separate gradient and curvature information.}\par

A backward pass returns
\[
    \nabla f(h),
\]
which determines local first-order effects. It does not return the Hessian \(H_0\), which describes how those effects change as the activation moves.

\noindent\textbf{Step 2: Treat HVPs as measurements of curvature.}\par

For a chosen direction \(v\), an HVP returns
\[
    H_0v.
\]
This expression is linear in the unknown Hessian. Choosing directions \(v_1,\ldots,v_m\) therefore gives designed linear measurements of the local curvature map. Selected coordinates or projections of these vectors can be stacked into an inverse problem.

\noindent\textbf{Step 3: Recover structured pair effects.}\par

If the relevant entries of \(H_0\) are sparse or compressible and the HVP design separates the possible supports, standard sparse recovery applies. The recovered object is local curvature at the operating point.

\noindent\textbf{Step 4: Connect local curvature to a finite intervention.}\par

If the response is quadratic along the intervention path, the Hessian is constant and
\[
    \Gamma_\eps=\eps H_0.
\]
Multiplying the recovered curvature by \(\eps\) then gives the normalized finite pair map exactly.

For a general nonlinear path, the Hessian may change as the activation moves. One HVP at the starting point identifies only local curvature. Predicting the finite response then requires integrating curvature along the path or calibrating the local estimate with finite interventions.
\end{proof}

HVPs make local curvature cheap under white-box access. Lifted forward measurements instead estimate pair effects at the scale where the masks are applied. They agree when local curvature transfers to that finite scale; otherwise the remaining difference is again a finite-scale residual to measure.

\end{document}